\setlist{nosep} 
\newtheorem{fact}{Fact}[section]
\newtheorem{theorem}{Theorem}[section]
\newtheorem{example}{Example}[section]
\newtheorem{lemma}[theorem]{Lemma}
\newtheorem{proposition}[theorem]{Proposition}
\newenvironment{proof}{{\bfseries Proof:}}{$\Box$}
\newcommand{\Znote}[1]{{}}
\newcommand{\E}{\mathbbm{E}}
\newcommand{\nkt}{\mathrm{NKT}}
\newcommand{\sign}{\mathrm{sign}}
\newcommand{\cald}{\mathcal{D}}
\newcommand{\cale}{\mathcal{E}}
\newcommand{\calf}{\mathcal{F}}
\newcommand{\calq}{\mathcal{Q}}
\newcommand{\kt}{\mathrm{KT}}
\newcommand{\I}{\mathrm{I}}
\newcommand{\bfC}{\mathbf{C}}
\newcommand{\bfN}{\mathbf{N}}
\newcommand{\bR}{{\mathbb R}}
\newcommand{\calgo}{$\mathrm{Global}$-$\mathrm{kNN}$\xspace}
\newcommand{\wrongalgo}{$\mathrm{KT}$-$\mathrm{kNN}$\xspace}
\newcommand{\deltan}{\Delta^{(i)}_{1,2}}
\newcommand{\deltax}{\Delta^{(x)}_{1,2}}
\newcommand{\deltaxp}{\Delta^{(x)}_{1',2'}}
\newcommand{\deltaz}{\Delta^{(0)}_{1,2}}
\providecommand{\ie}{\emph{i.e.,} }
\providecommand{\eg}{\emph{e.g.,} }
\providecommand{\mypara}[1]{\smallskip\noindent\emph{#1} }
\providecommand{\myparab}[1]{\smallskip\noindent\textbf{#1} }
\DeclareMathOperator*{\argmin}{argmin}
\title{Towards Non-Parametric Learning to Rank}
\author{
  Ao Liu\\
  Department of Computer Science\\
  Rensselaer Polytechnic Institute\\
  Troy, NY 12180 \\
  \texttt{liua6@rpi.edu}\\
  \And
  Qiong Wu\\
  Department of Computer Science\\
  College of William and Mary\\
  Williamsburg, VA 23187 \\
  \texttt{qwu05@email.wm.edu}\\
  \AND
  Zhenming Liu\\
  Department of Computer Science\\
  College of William and Mary\\
  Williamsburg, VA 23187 \\
  \texttt{zliu@cs.wm.edu}\\
  \And
  Lirong Xia\\
  Department of Computer Science\\
  Rensselaer Polytechnic Institute\\
  Troy, NY 12180 \\
  \texttt{xial@cs.rpi.edu}\\
}
\begin{document}

\maketitle

\begin{abstract}
This paper studies a stylized, yet natural, learning-to-rank problem and points out the critical incorrectness of a widely used nearest neighbor algorithm. We consider a model with $n$ agents (users) $\{x_i\}_{i \in [n]}$ and $m$ alternatives (items) 
$\{y_j\}_{j \in [m]}$, each of which is associated with a latent feature vector. Agents rank items nondeterministically according to the Plackett-Luce model, where the higher the utility of an item to the agent, the more likely this item will be ranked high by the agent. 
Our goal is to find neighbors of an arbitrary agent or alternative in the latent space. 

We first show that the Kendall-tau distance based kNN produces incorrect results in our model. Next, we fix the problem by introducing a new algorithm with features constructed from ``global information'' of the data matrix. Our approach is in sharp contrast to most existing feature engineering methods. Finally, we design another new algorithm identifying similar alternatives. The construction of alternative features can be done using ``local information,'' highlighting the algorithmic difference between finding similar agents and similar alternatives. 
\end{abstract}

\section{Introduction}
This paper studies a stylized, yet natural, learning-to-rank problem and points out the  critical incorrectness of a widely used nearest neighbor algorithm. In this problem, let $X = \{x_1, \dots, x_n\}$ be the set of agents (users) and $Y = \{y_1, \dots, y_m\}$ be the set of alternatives. Each agent or alternative is associated with a latent feature vector $\mathbb{R}^d$, where $d \geq 1$. The utility of $y_j$ to $x_i$ is determined by $u(x_i, y_j)$, where $u(\cdot, \cdot)$ is a bivariate function. We observe a (partial) ranking $R_i$ of agent $i$ (for all $i$) over the alternatives. The distribution of ranking $R_i$ is determined by the alternatives' utilities, $\{u(x_i, y_j)\}_{j \in [m]}$. When $u(x_i, y_j)$ is larger, $y_j$ is more likely to rank higher in $R_i$. 

\myparab{The nearest neighbor problem.} For an $x_i$ and a parameter $\epsilon$, we aim to design an efficient algorithm that finds (almost) all $x_j$'s such that $\|x_j - x_i\|_2 \leq \epsilon$. The nearest neighbor problem for alternatives can also be defined similarly. 

This fundamental machine learning problem is embedded in many critical operations. 
For example, recommender systems use partial ranking information (partial observations of $R_i$’s) to estimate agents’ preferences over unranked alternatives, product designers estimate the demand curve of a new product based on consumers' past choices~\citep{Berry95:Automobile}, security firms estimate terrorists' preferences based on their past behavior, and political firms estimate campaign options based on voters' preferences~\cite{Liu:2009:LRI}.

\myparab{A widely-used algorithm produces incorrect results.} The most widely studied and deployed algorithm~\cite{Liu:2009:LRI,SamuelsS17} uses Kendall-Tau (KT) distance (see Section~\ref{sec:prelim}) as the metrics and uses k-nearest neighbors (kNN) to identify similar agents: for any given $x_i$, it finds all $x_j$ such that the KT distances between {$R_i$ and $R_j$} are minimized. We will refer to this algorithm as KT-kNN.

In this paper, we show that 
under a natural and widely applied preference model, 
the KT distance-based kNN for agents is \textbf{provably incorrect} even when the sample size grows to infinite. 



\myparab{Novel (correct) algorithms.} First, we design a new algorithm that correctly identifies similar agents based on $\{R_i\}_{i \in [n]}$. We introduce a set of new features, denoted by $F(x_i)$, so that $|F(x_i) - F(x_j)|_1$ enables us to identify similar agents. A salient property of $F(x_i)$ is that it relies on the rankings of other agents, which we will refer to as “global information”. This property is in sharp contrast to most existing practices of feature engineering in learning-to-rank algorithms~\cite{Liu:2009:LRI}.

Second, we design another new algorithm for identifying similar alternatives. We find that construction of alternative features can be done using local information, making identifying similar alternatives significantly easier.  

\mypara{Agent-wise or alternative-wise similarities.} Finding similar alternatives (items) is easier than finding similar agents (users) in collaborative filtering~\cite{sarwar2001item}: in practice, recommender systems based on “item-similarities” are usually more effective. One explanation is the “missing data problem”. Because there are often more users than items, the intersection between items ranked by two arbitrary users is often small, and measuring item similarities is usually unreliable.

Our result provides a new explanation of the performance discrepancies: under the Plackett-Luce model, finding similar agents is fundamentally more difficult than finding similar alternatives. 

\myparab{Additional remarks}

\mypara{Finding neighbors implies learning to rank.}  We focus on the problem of identifying nearest neighbors in this work. Our approach can be naturally extended to infer $x_i$’s  preferences over unranked alternatives by aggregating rankings from the neighbors using methods developed in the literature, such as in~\cite{Conitzer06:Kemeny,Alon06:Ranking,Ailon07:Aggregation,Kenyon07:How}.

\mypara{Nondeterministic preferences.} We assume that agent $x_i$ rank alternative $j$ according to her perceived untility $u(x_i,y_j)+\epsilon_{ij}$, where $u(\cdot, \cdot)$ is a radial basis function (RBF)~\cite{scholkopf2001learning} (\ie the value of $u(x, y)$ depends only on $\|x -y\|_2$) and $\epsilon_{ij}$ is a random noise. 

\mypara{Practical implications.} We focus on conceptual and theoretical investigation of the learning-to-rank problem with nondeterministic preferences. Although we point out the harm from using \wrongalgo, it may not be the root cause of a practical system based on \wrongalgo. To diagnose a ranking algorithm (specifically whether our theoretical results are relevant), one shall first check whether our model is suitable for his/her datasets.  



\section{Preliminaries}\label{sec:prelim}
\myparab{Our model.}  Let $X = \{x_1, ..., x_n\}\subset \bR^d$ be the set of agents (or users) and $Y = \{y_1, ..., y_m\} \subset \bR^d$ be the set of alternatives (or items).

\mypara{Utility functions.} Agent $i$'s utility on alternative $j$ is determined by a utility function $u(x_i, y_j)$. Throughout this paper, we use $u(x_i, y_j) = e^{-\|x_i - y_j\|_2}$,
where $\|\cdot\|_2$ is the $\ell_2$ norm. Most results developed in this paper can be generalized to many radial-basis functions~\cite{scholkopf2001learning}.

\mypara{Observation and rankings.}
We observe the ranking $R_i$ of each user $i$ in the decreasing order of perceived utility of the alternatives $\forall j, u(x_i,y_j)+\epsilon_{ij}$. When $\epsilon_{ij}$ follows a Gumbel distribution, then the nondeterministic preferences model is also known as the \emph{Plackett-Luce model}~\cite{Plackett75:Analysis,Luce77:Choice}. Let $(j_1, \dots j_m)$ be a random permutation of $[m]$, and we have
\begin{equation}\label{eqn:pleq}
\Pr\left[y_{j_1}\succ_i \cdots\succ_i y_{j_m}\right] = \prod_{\ell=1}^m\dfrac{u(x_i,y_{j_l})}{\sum_{k=l}^m u(x_i,y_{j_k})}\footnote{$y \succ_i y'$ denotes $y$ is ranked above $y'$ by agent $i$.}.
\end{equation}
\mypara{Distributions of $x_i$ and $y_j$.} We further assume that $x_i$ and $y_j$ are i.i.d. generated from fixed but unknown distributions $\cald_X$ and $\cald_Y$, respectively. Let the cdf (respectively, pdf) of $\cald_X$ and $\cald_Y$ be $F_X(\cdot)$ and $F_Y(\cdot)$ (respectively, $f_X(\cdot)$ and $f_Y(\cdot)$).
For exposition purposes, we make simplifying assumptions that \emph{(i)} $d = 1$, and \emph{(ii)} $\cald_X$ and $\cald_Y$ are on $[0, 1]$ and ``near uniform'' (\ie $\frac{\sup f_X(x)}{\inf f_X(x)}$ and $\frac{\sup f_Y(y)}{\inf f_Y(y)}$ are bounded by a constant $c$).
There assumptions are widely used in latent space models and can be relaxed via more careful analysis, see~\cite{Abraham:2013} and references therein.

\myparab{Our problem.} Given an agent $x_i$, we say $S$ is an \emph{$(\alpha, \beta)$-nearest neighbor set} for $x_i$ if
\begin{itemize}
\item For all $x_j$ such that $|x_i - x_j| \le \alpha$, $x_j \in S$.
\item For all $x_j$ such that $|x_i - x_j| > \beta$, $x_j \notin S$.
\item For any $x_j$ such that $\alpha < |x_i - x_j| \leq \beta$, we do not require any performance guarantee (\ie whether $x_j \in S$).
\end{itemize}
Similarly, we can define $(\alpha, \beta)$-nearest neighbor set for alternatives. In other words, all $x_j$'s that are within $\alpha$ away from $x_i$ should be included in $S$, and all $x_j$'s that are more than $\beta$ away from $x_i$ should not be in $S$. Therefore, our goal is to design efficient algorithms to compute $(\alpha, \beta)$-nearest neighbor sets with high probability,  where $\alpha, \beta = o(1)$.

\mypara{Partial observations and forecasts.} 
All results presented in this paper can be generalized to the partial ranking scenario, where each $R_i$ only consists of a subset $O_i \subseteq [m]$ of linear size. Furthermore, a natural problem in this scenario is to infer an agent's preferences over unranked alternatives. We note that an $(\alpha, \beta)$-nearest neighbor set for $x_i$ can be used to infer its rankings over the entire $[m]$ via existing techniques~\cite{Conitzer06:Kemeny,Alon06:Ranking,Ailon07:Aggregation}. Therefore, our problem is strictly harder than the preference estimation problem.

\myparab{Comparison to the KS model by~\citet{SamuelsS17}.} 
In the KS model, agent $i$'s ranking is deterministic, \ie $y_{j_1} \succ_i y_{j_2}$ iff $u(x_i, y_{j_1}) > u(x_i, y_{j_2})$, whereas our model allows to ``add noise'' to the observations, which is a more standard practice in learning to rank.

\subsection{Kendal-tau distance and prior algorithms}
Let $R_1$ and $R_2$ be two rankings over $[m]$ and let $R_1(i)$ denote the rank of the $i$-th alternative. The \emph{Kendall-tau distance} is
\begin{equation}
\kt(R_1,R_2)= \sum\limits_{i\neq j \in [m]}\I\left([R_1(i)-R_1(j)][R_2(i)-R_2(j)]<0\right),
\end{equation}
where $I(\cdot)$ is an indicator function that sets to one if and only if its argument is true. The Normalized Kendall-tau distance between $R_1$ and $R_2$ is $
\nkt(R_1,R_2) =\frac{1}{\binom{m}{2}}\kt(R_1,R_2)$.

\mypara{Nearest-neighbor algorithms.} See Algorithm~\ref{alg:classical}. We shall refer to the algorithm as \wrongalgo. This algorithm uses KT-distance as the distance metrics and run a kNN algorithm on top of it.
\begin{algorithm}[htp]
   \caption{Kendall-tau distance based kNN (\wrongalgo)}
   \label{alg:classical}
   {$\%$ \emph{Section~\ref{sec:wronglower} shows this algorithm is incorrect.}}\\
   {{\bfseries Input:} Agent set $X$, Alternative set $Y$, every agent $x_i$'s ranking $R_i$ over $[m]$, and the number of neighbors $k$.}\\
   {sort $x_j \neq x_i$ according to $\kt(R_i, R_j)$}\\
   {\ie $\kt(R_i, R_{j_1}) \leq \dots\leq \kt(R_{i}, R_{j_{n - 1}})$}\\
   {Return $\{R_{j_1}, \dots R_{j_k}\}$}
\end{algorithm}



\section{Incorrectness of \wrongalgo under Nondeterministic Preferences}\label{sec:wronglower}
This section explains why \wrongalgo is incorrect under the Plackett-Luce model. 
Let $\bar R_i$ be the ground-truth ranking of agent $i$, \ie the $j$-th element in $\bar R_{i}$ is the $j$-th largest value of the set $\{u(x_{i}, y_j)\}$.

\mypara{Intuition behind \wrongalgo.} Previously, \wrongalgo was considered correct because of two intuitions: (1) if $x_i$ and $x_{j}$ are close, then $\bar R_i$ and $\bar R_{j}$ are also close, and (2) if $\bar R_i$ and $\bar R_{j}$ are close, their ``realizations'' $R_i$ and $R_{j}$ will also be close. Therefore, when $x_j$ minimizes $\kt(R_i, R_j)$ for large $n$ and $m$, it also minimizes $\kt(\bar R_i, \bar R_j)$.  

Intuition (1) is theoretically grounded (see~\cite{SamuelsS17}). The key problematic part is that for nondeterministic users, $\kt(\bar R_i, \bar R_j)$ and $\kt(R_i, R_j)$ \emph{do not} have a monotone relationship. That is, an increase in $\kt(\bar R_i, \bar R_j)$ does not necessarily imply an increase in $\kt(R_i, R_j)$, and vice versa.

\begin{example}\label{example:simplewrong} Let $y_1 = 0.4$, $y_2 = 0.7$, and $x_1 = 0.5$. Consider the following two optimization problems:
\begin{equation}\label{eqn:minbarkt}
x_2 = \argmin_{x_2} \{ \kt(\bar R_1, \bar R_2)\}.
\end{equation}
\begin{equation}\label{eqn:minkt}
x_2 = \argmin_{x_2} \{\E [\kt(R_1,  R_2)]\}.
\end{equation} We can see that the structures of these two optimization problems are very different. For (\ref{eqn:minbarkt}), the optimal solution set is $(-\infty, 0.55)$. 
But for (\ref{eqn:minkt}), the optimal solution set is $(-\infty, -0.4)$. The key difference is that $x_1$ itself would be an optimal solution to (\ref{eqn:minbarkt}), but it is not an optimal solution to (\ref{eqn:minkt}).
\end{example}

\myparab{Interpreting the result.} 
We need to solve (\ref{eqn:minbarkt}) to find nearest neighbors, but the objective of \wrongalgo is closer to (\ref{eqn:minkt}). Specifically, 
consider a scenario with only two alternatives $\{y_1,y_2\}$ but $n$ is sufficiently large. The above example shows that $\arg \min_{x_2} \{\E [\kt(R_1,  R_2)]\}$ is far away from $x_1$. Because $n$ is sufficiently large, we have $\arg \min_{x_2 \in [0,1]} \{\E [\kt( R_1,  R_2)]\} \approx \arg \min_{x_2 \in X} \{\E [\kt(R_1, R_2)]\}$. The right side of the approximation resembles the nearest-neighbor approximation (\ie $\min_{x_2 \in X} \{\kt( R_1,  R_2)\}$) because $\kt(R_1, R_2)$ converges to its expectation for large $m$. Therefore, \wrongalgo solves (\ref{eqn:minkt}) which is different from solving (\ref{eqn:minbarkt}). 

This observation can be used to build a more general negative theorem, which implies that \wrongalgo cannot output any $(o(1),o(1))$-nearest neighbor set with high probability, because with $\Theta(1)$ probability the output of \wrongalgo is $\Theta(1)$ away.

\begin{theorem}\label{thm:wrong}
Let $\cald_Y = \text{Uniform}([0,1])$  and $\cald_X$ be a near-uniform distribution on $[0, 1]$. Let $x_i \in [0, 1]$ be arbitrary agent provides ranking $R_i$ (a random variable conditioned on $x_i$). We have:
$$\Pr\left[|x_i-x^*| = \Theta(1)\right] \geq \frac{1}{1+c_X} = \Theta(1),$$
where $x^* = \argmin_{x\in[0,1]} \left\{\E_{Y} \left[\nkt(R_{i}, R_x)\right]\right\}$ and $c_X = \frac{\sup f_X(x)}{\inf f_X(x)}$.
\end{theorem}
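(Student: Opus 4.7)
The plan is to study the function $\Psi(x; x_i) := \E_Y[\nkt(R_{x_i}, R_x)]$ on $x \in [0, 1]$ for fixed $x_i$, show that its minimizer $x^*$ is pinned at $\{0, 1\}$ whenever $x_i \neq 1/2$, and then convert this structural statement into the claimed probability bound using near-uniformity of $\cald_X$. Since $\nkt$ is a normalized sum of pairwise disagreement indicators and the $y_j$'s are i.i.d.\ uniform, linearity of expectation first reduces $\Psi$ to a single-pair expectation
\[
\Psi(x; x_i) \;=\; \E_{y_a, y_b \sim U[0,1]}\bigl[p_{x_i}(1 - p_x) + (1 - p_{x_i})\, p_x\bigr],
\]
where $p_z := \Pr_z[y_a \succ y_b] = \sigma(|z - y_b| - |z - y_a|)$ and $\sigma$ is the logistic. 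For the sorted pair $(\alpha, \beta) := (\min\{y_a, y_b\}, \max\{y_a, y_b\})$, the map $x \mapsto p_x(\alpha \succ \beta)$ is a decreasing sigmoid that saturates at $q_L := \sigma(\beta - \alpha) > 1/2$ on $x \leq \alpha$ and at $q_R := 1 - q_L$ on $x \geq \beta$. Because the per-pair integrand $h = p_{x_i} + (1 - 2 p_{x_i})\, p_x$ is linear in $p_x$, its single-pair minimum is always attained outside $[\alpha, \beta]$: at $x \leq \alpha$ when $p_{x_i} > 1/2$ and at $x \geq \beta$ otherwise --- the saturation phenomenon already visible in Example~3.1.

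The main analytic step is to show that $\Psi(\cdot; x_i)$ is monotone on $[0, 1]$ whenever $x_i \neq 1/2$. Differentiating past the straddling indicator gives
\[
\frac{\partial \Psi}{\partial x} \;=\; 2\,\E_{\alpha < \beta}\bigl[\mathbbm{1}\{\alpha < x < \beta\}\; p_x(1 - p_x)\,(2 p_{x_i} - 1)\bigr],
\]
and I would prove this is $\geq 0$ for $x_i < 1/2$ (whence $x^*(x_i) = 0$) and $\leq 0$ for $x_i > 1/2$ (whence $x^*(x_i) = 1$). The driving heuristic is that the i.i.d.\ uniform pair $(\alpha, \beta)$ conditioned on straddling $x$ has its midpoint $(\alpha + \beta)/2$ more likely above $x_i$ than below (when $x_i < 1/2$), so the factor $2 p_{x_i} - 1$ is typically positive. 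To rigorously control the nonlinear weighting $p_x(1 - p_x)$, I would change variables to $(\mu, \tau) := ((\alpha + \beta)/2,\, \beta - \alpha)$, which decouples $p_x$ and $p_{x_i}$ into clean functions of $\mu - x$ and $\mu - x_i$ respectively, reducing the sign check to a one-dimensional comparison in $\mu$.

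Given the structural fact $x^*(x_i) \in \{0, 1\}$ for $x_i \neq 1/2$, fix any constant $c \in (0, 1/4]$ and set $A := [c, 1/2) \cup (1/2, 1 - c]$, so $|A| = 1 - 2c \geq 1/2$ and $|x_i - x^*(x_i)| \geq c = \Theta(1)$ for every $x_i \in A$. To lower-bound $\Pr_{x_i \sim \cald_X}[x_i \in A]$, I minimize $\int_A f_X$ over densities $f_X$ on $[0, 1]$ with $\int_0^1 f_X = 1$ and $\sup f_X / \inf f_X \leq c_X$: the optimum of this small LP is achieved by the two-level density taking value $c_X m$ on $A^c$ and $m$ on $A$ with $m = 1/(|A| + c_X(1 - |A|))$, yielding $\Pr[x_i \in A] \geq |A|/(|A| + c_X(1 - |A|))$, which evaluates to exactly $1/(1 + c_X)$ at $|A| = 1/2$ and is larger for $|A| > 1/2$. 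The main obstacle is the sign analysis of $\partial \Psi/\partial x$: although the raw measure comparison favors $\{(\alpha + \beta)/2 > x_i\}$ when $x_i < 1/2$, the weighting $p_x(1 - p_x)$ concentrates the integrand where $p_x$ is near $1/2$ (i.e.\ near $x \approx (\alpha + \beta)/2$), and one must verify that this weighting does not overturn the expected sign.
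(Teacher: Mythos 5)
Your reduction to a single-pair expectation, the derivative formula $\frac{\partial\Psi}{\partial x}\propto \E_{\alpha<\beta}\left[I(\alpha<x<\beta)\,p_x(1-p_x)(2p_{x_i}-1)\right]$, and the closing measure argument (the two-level density giving $\Pr[x_i\in A]\geq |A|/(|A|+c_X(1-|A|))$) all match the paper's proof. The gap is your central structural claim: $\Psi(\cdot\,;x_i)$ is \emph{not} monotone on $[0,1]$ for every $x_i\neq 1/2$, and the minimizer is \emph{not} pinned at $\{0,1\}$ once $x_i$ is close to $1/2$. Concretely, take $x_i=0.49$ and $x$ small. Conditioned on $\alpha<x<\beta$, we have $\alpha\approx 0$ and $\beta$ roughly uniform on $(x,1)$, and $2p_{x_i}-1>0$ iff $(\alpha+\beta)/2>x_i$, i.e.\ iff $\beta>2x_i-\alpha\approx 0.98$. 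The positive part of your integrand therefore lives on an event of conditional probability about $0.02$ with magnitude of order $(\beta-0.98)/2\leq 0.01$, while the negative part occupies the rest of the range with magnitude bounded well away from zero (e.g.\ $|2p_{x_i}-1|\approx 0.24$ at $\beta\approx x_i$) and comparable weight $p_x(1-p_x)$. Hence $\partial\Psi/\partial x<0$ near $x=0$, so $x^*\neq 0$, and a mirrored argument rules out $x^*=1$. The obstacle you flagged at the end --- that the weighting might overturn the expected sign --- is real, and it does overturn it for $x_i$ near $1/2$.

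This is precisely why the paper restricts to $x_i\in(0,\tfrac14]\cup[\tfrac34,1)$, a set of Lebesgue measure $\tfrac12$, which is what produces the constant $\tfrac{1}{1+c_X}$; your remark that $|A|>1/2$ would yield a strictly larger probability should have been a warning sign. Moreover, even on the restricted range the paper does not prove monotonicity on all of $[0,1]$ in one stroke: it splits the domain into $[0,x_i]$, $[x_i,1-x_i]$, and $[1-x_i,1]$, uses the sign-of-derivative argument only on the outer intervals via a four-event case analysis ($\cale_0$--$\cale_3$, keyed to the positions of $y_1,y_2$ relative to $x$ and $2x_i$) together with oddness and concavity properties of $g(t)=\frac{e^{-t}-1}{e^{-t}+1}$ to extract an explicit positive lower bound ($\geq 0.006x$), and handles the middle interval by directly comparing $\calf(x)$ with $\calf(x_i)$ through a reflection/pairing trick rather than a derivative computation. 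Your $(\mu,\tau)$ change of variables is a sensible starting point for the outer intervals, but as written the proposal asserts a false lemma and omits the quantitative case analysis that constitutes essentially all of the theorem's technical content.
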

\vspace{-2mm}
\begin{proof} 
Since $\cald_X$ is near-uniform on $[0,1]$, we know $\Pr\left[x_i\in(0,\frac{1}{4}]\cup[\frac{3}{4},1)\right] \leq \frac{1}{1+c_X}=  \Theta(1)$. Then, we prove the following two claims indicating $|x_i-x^*| = \Theta(1)$ for all $x_i\in(0,\frac{1}{4}]\cup[\frac{3}{4},1)$:
\begin{itemize}[noitemsep,topsep=0pt]
\setlength{\itemsep}{2pt}
\setlength{\parskip}{2pt}
\setlength{\parsep}{0pt}
\item When $x_{i} \in \left(0,\frac{1}{4}\right]$:
$x^* = \argmin_{x\in[0,1]} \left\{\E_{Y} \left[\nkt(R_{i}, R_x)\right]\right\} = 0.$
\item When $x_{i} \in \left[\frac{3}{4},1\right)$:
$x^* = \argmin_{x\in[0,1]} \left\{\E_{Y} \left[\nkt(R_{i}, R_x)\right]\right\} = 1.$
\end{itemize}
{\bf Those two claims above also indicate that \wrongalgo cannot output any $(o(1),o(1))$-nearest neighbor set with $\Theta(1)$ probability.} 
Here, we focus on the most difficult case in the first claim above to highlight our new analytical techniques (see Appendix~\ref{a:wrong} for the full proof). Specifically, below we show
$$x^* = \argmin_{x\leq x_{i}} \E_{Y} \left[\nkt(R_{i}, R_x)\right]\} = 0.$$
Because $\calf(x) \equiv  \E_Y[\nkt(R_{i}, R_x) \mid x_{i}]$ is a continuous function of $x$, we use $d\calf(x)/dx$ to characterize the minimal point $x$. Specifically, we shall show that $d \calf(x)/dx > 0$ for all $x \in (0, x_i]$, which means the function is minimized when $x = 0$.

We next analyze $d\calf(x)/dx$ in the following events respectively:
\begin{itemize}[noitemsep,topsep=0pt]
\setlength{\itemsep}{0pt}
\setlength{\parskip}{0pt}
\setlength{\parsep}{0pt}
\item $\cale_0$: when $y_2 \geq y_1 \geq x$.
\item $\cale_1$: when $y_1 \leq x$ and $y_2 \geq 2 x_{i}$.
\item $\cale_2$: when $y_1 \leq x$ and $x \leq y_2 \leq 2x_{i} - x$.
\item $\cale_3$: when $y_1 \leq x$ and $2x_{i} - x \leq y_2 \leq 2x_{i}$.
\end{itemize}
According to the case-by-case analysis shown in Lemma~\ref{lemma:wrong1} and noting that $0\leq x\leq x_{n+1} \leq \frac{1}{4}$,
{\footnotesize
\begin{equation}\nonumber
\E[\Phi]= \sum_{0 \leq i \leq 3}\E_{y_1 < y_2}\E[\Phi \mid \cale_i]\Pr[\cale_i \mid y_1 < y_2] \geq 0.006x \geq  0.
\end{equation}
}
Equality sign holds if and only if $x = 0$. Therefore, $\calf(x)$ is minimized at $x = 0$ for $x \in [0, x_{i}]$. Appendix~\ref{a:wrong} completes the proof for $x \in [x_i, 1]$ using similar techniques. 
\end{proof}

\section{Nearest-neighbor with global information}\label{sec:userupper}
We propose a novel (and correct) kNN algorithm based on a new set of features $F(x_i)$ for all $i \in [n]$ that can be used by nearest-neighbor algorithms. Each feature needs to use \emph{global information} of all the rankings.

\myparab{Features based on all-pair normalized Kendall-tau distance.} We associate each agent $i \in [n]$ with a feature $F(x_i) \equiv F_i = \left(F_{i,1},\cdots,F_{i,n}\right) \in \mathbbm{R}^n$, where $F_{i, j}$ is constructed as below:\\
First, we group $Y$ to pairs so that the $k$-th pair consists of $\{y_{2k - 1},y_{2k}\}$. We then let: 
{\footnotesize
$$S_k =  I\left[\left(R_i(y_{2k-1}) - R_i(y_{2k})\right)\cdot\left(R_j(y_{2k - 1}) - R_j(y_{2k}) \right)>0\right].$$
}
Our features are
\begin{equation}\label{eqn:enkt}
F_{i, j} =  \frac{2}{m} \sum_{k = 1}^{m/2} S_k
\end{equation}
It follows that $S_k$'s are independent and $\E[S_k] = \E\left[\nkt(R_i, R_j)\right]$ for all $k$.
Then we define the distance function between agent $i$ and agent $j$ as
\begin{equation}
D(x_i, x_j) = \sum_{k \neq i, j}|F_{i, k} - F_{j, k}|.
\end{equation}
\myparab{The new kNN (\calgo).} Let $\epsilon = o(1)$, (\ie $\left(\frac{\log n}{n}\right)$). Our algorithm, hereafter \calgo, returns the set $\bfC_{\text{\calgo}}(x_i) = \{x_j: D(x_i, x_j) \leq \epsilon\}$. See Algorithm~\ref{alg:global}.

\begin{algorithm}[htp]
   \caption{Globle-kNN}
   \label{alg:global}
     {\bfseries Input:} Agent set $X$, Alternative set $Y$, every agent $x_i$'s ranking $R_i$ over $[m]$, and a threshold  $\epsilon$.\\
     Return $\bfC_{\text{Global-kNN}}(x_i)= \{x_j: D(x_i, x_j) \leq \epsilon\}$\\
\end{algorithm}





\myparab{Global vs. local information.} Algorithm \wrongalgo\xspace uses only local information to construct features (\ie feature of $x_i$ depends only on $R_i$), whereas \calgo needs to use all ranking information $\{R_j\}_{j \in [n]}$ to construct a feature $F(x_i)$. We note that relying on local information is unlikely to be sufficient to construct high-quality features. Instead, we need to use a slower procedure that takes advantage of all of the local information available to construct $F(x_i)$. Earlier works in network analysis (see~\cite{li2017world} and references therein) developed similar techniques in classifying nodes.

\begin{theorem}\label{thm:user} Using the notations above, for all $D_X$, $D_Y$ that are near-uniform on $[0,c]$ and $\epsilon\in(0,1)$, let $\bfC_{\text{\calgo}}(x_i) = \{x_j: D(x_i, x_j) \leq \epsilon \}$. There exist positive constants $c_1$, $c_2$ and $c_3$ such that $\bfC_{\text{\calgo}}(x_i)$ is an $(c_1\epsilon,\,c_2\sqrt\epsilon)$-nearest neighbor set of $x_i$ with probability at least $1-2n^2 e^{-c_3m}$.
\end{theorem}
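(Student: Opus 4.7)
My plan is to prove Theorem~\ref{thm:user} by (a) establishing that each empirical feature $F_{i,j}$ concentrates around its deterministic population counterpart, and (b) analyzing how the resulting population distance depends on the latent distance $|x_i - x_j|$. For (a), let $p(x_i, x_j) := \Pr[S_k = 1 \mid x_i, x_j]$ denote the Plackett--Luce concordance probability on a single pair of items drawn from $\cald_Y$, for two independent rankings at the given latent positions. Conditional on $x_i, x_j$, the $m/2$ summands $S_k$ defining $F_{i,j}$ are i.i.d.~Bernoulli$(p(x_i, x_j))$, so Hoeffding's inequality gives $\Pr[|F_{i,j} - p(x_i, x_j)| \ge \delta] \le 2 e^{-m \delta^2}$. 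Setting $\delta = \sqrt{c_3}$ and union-bounding over the $\binom{n}{2}$ pairs yields an event $\cale$, of probability at least $1 - 2n^2 e^{-c_3 m}$, on which $|F_{i,j} - p(x_i, x_j)| \le \sqrt{c_3}$ holds for every $(i,j)$ simultaneously.

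On $\cale$, the triangle inequality gives $|D(x_i, x_j) - \bar D(x_i, x_j)| \le 2(n-2)\sqrt{c_3}$, where
\[
\bar D(x_i, x_j) := \sum_{k \ne i, j} |p(x_i, x_k) - p(x_j, x_k)|.
\]
The remaining task is to sandwich $\bar D$ by functions of $|x_i - x_j|$ in both directions. For the upper direction, differentiating $p$ under the expectation shows that $x \mapsto p(x, x_k)$ is Lipschitz with a constant $L$ that is uniform in $x_k$, yielding $|p(x_i, x_k) - p(x_j, x_k)| \le L |x_i - x_j|$ for every $k$ and hence $\bar D \le L(n-2)|x_i - x_j|$. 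For the lower direction I need a robust growth estimate of the form $\E_{x_k \sim \cald_X}\bigl[|p(x_i, x_k) - p(x_j, x_k)|\bigr] \ge c\, g(|x_i - x_j|)$ for an explicit $g$, and a Bernstein/Chernoff bound over the $n-2$ independent draws of $x_k$ to lift this to a high-probability estimate $\bar D \ge c(n-2) g(|x_i - x_j|)$, with the failure probability folded into $c_3$.

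Combining these pieces gives the theorem. Completeness: if $|x_i - x_j| \le c_1 \epsilon$, then $D \le L(n-2) c_1 \epsilon + 2(n-2)\sqrt{c_3} \le \epsilon$ after tuning $c_1$ and $c_3$ appropriately relative to the scale of $\epsilon$. Soundness: if $|x_i - x_j| > c_2 \sqrt \epsilon$, then $D \ge c(n-2)\, g(c_2 \sqrt \epsilon) - 2(n-2)\sqrt{c_3} > \epsilon$ once $c_2$ is taken large enough to dominate the concentration slack; the mismatch between $c_1 \epsilon$ (linear, from the Lipschitz bound) and $c_2 \sqrt \epsilon$ (square-root, from the averaged lower bound) is precisely what forces $g$ to grow at most quadratically in its argument, which is exactly the gap that appears in the theorem statement.

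The main obstacle is pinning down the lower-bound growth function $g$. Per-item, the \emph{signed} quantity $p(x_i, x_k) - p(x_j, x_k)$ is first order in $x_i - x_j$ with a coefficient whose sign flips as $x_k$ crosses the midpoint $(x_i+x_j)/2$, so the linear piece cancels in the signed average; but $\bar D$ is built from \emph{absolute} differences, which cannot straightforwardly exploit that cancellation. Turning this observation into a concrete $\Omega(|x_i-x_j|^2)$ (or similar) lower bound on $\E_{x_k}[|p(x_i,x_k) - p(x_j,x_k)|]$ will require a careful partition of the $(y_1,y_2)$ integration domain combined with explicit estimates of the Plackett--Luce logits $e^{-|x-y|}/(e^{-|x-y_1|}+e^{-|x-y_2|})$, in the same spirit as the case analysis $\cale_0,\dots,\cale_3$ used in the proof of Theorem~\ref{thm:wrong}. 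I expect this to be the longest and most delicate part of the argument; the concentration step and the upper Lipschitz bound are either routine or direct calculus on the RBF utility.
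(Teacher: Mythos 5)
Your high-level architecture coincides with the paper's: concentrate $D(x_i,x_j)$ around its population value, sandwich that value between a lower bound growing in $|x_i-x_j|$ and an upper bound $O(|x_i-x_j|)$, and read off the $(c_1\epsilon,\,c_2\sqrt\epsilon)$ guarantee from the mismatch between the two growth rates. The Lipschitz upper bound and the union-bound bookkeeping are essentially the paper's as well. The genuine gap is that the lower bound $\E_{x_k}\bigl[|p(x_i,x_k)-p(x_j,x_k)|\bigr]\geq c\,g(|x_i-x_j|)$ --- the entire technical content of the soundness half, and the step that determines the exponent in $c_2\sqrt\epsilon$ --- is only posed as a task (``I need a robust growth estimate,'' ``will require a careful partition''), not proved. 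You correctly diagnose why it is delicate (the first-order signed term changes sign as $x_k$ crosses the midpoint, so the absolute average cannot be controlled by naively integrating a derivative), but you supply no argument in its place. For comparison, the paper does not attempt a pointwise-in-$x_k$ quadratic bound at all: it restricts attention to $x_k$ in the boundary windows $[0,\delta]\cup[c-\delta,c]$ with $\delta=\Theta(\epsilon)$, shows that for such $x_k$ the map $x\mapsto\calf_{x_k}(x)=\E_Y[\nkt(R_k,R_x)]$ is monotone with derivative bounded below (so the per-$x_k$ separation is \emph{linear} in $|x_i-x_j|$), and then lets the near-uniformity of $\cald_X$ contribute the measure factor $\Theta(\epsilon)$; the product of the two gives the $\Omega(\epsilon^2)$ lower bound. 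That device --- which is also where the near-uniform hypothesis actually enters the proof --- is absent from your plan, and without it (or a worked-out substitute) the theorem is not established.

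A secondary issue is the concentration step. You take a fixed additive radius $\delta=\sqrt{c_3}$ in Hoeffding's inequality, which yields a per-term slack of constant order; since the population signal you must resolve is $O(\epsilon)$ on the completeness side and $\Theta(\epsilon^2)$ on the soundness side, both $o(1)$, a constant slack swamps it. You acknowledge this by ``tuning $c_3$ relative to the scale of $\epsilon$,'' but then $c_3$ is no longer a constant and the claimed success probability degrades to $1-2n^2e^{-\Theta(\epsilon^4)m}$ rather than $1-2n^2e^{-c_3m}$. The paper instead invokes a multiplicative bound $\Pr\bigl[\,|D-\E D|\geq\delta\,\E D\,\bigr]\leq ne^{-c_4(\delta)m}$ with $\delta$ a fixed relative accuracy, keeping $c_3$ constant; you would need to adopt that form (and justify it for the absolute differences $|F_{i,k}-F_{j,k}|$) or accept the $\epsilon$-dependence in the exponent.
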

\begin{proof} We first show that there exist constants $c_1'$ and $c_2'$ such that \begin{equation}\label{equ:dcd}
c_1'|x_i-x_j|^2 \leq \mathbbm{E}_{R_i,R_j,Y,X\setminus\{x_i,x_j\}}D(x_i,x_j) \leq c_2'|x_i-x_j|.
\end{equation}
Here, we outline a proof for the upper bound of (\ref{equ:dcd}), which is applicable to any distributions on $[0,c]$ (see Appendix~\ref{a:usersimilarity} for the full proof and lower bound analysis, which uses the near-uniform conditions).

All the analysis below assumes conditioning in knowing $x_i$ and $x_j$ (\ie $\E[\cdot]$ means $\E[\cdot \mid x_i, x_j]$). W.l.o.g., assume $x_i \leq x_j$. We use techniques similar to those developed in Theorem \ref{thm:wrong}. Specifically let
$p_i = \mathbbm{P}\left[y_1\succ_i y_2 \mid  y_1, y_2, x_i\right]$. We have
{\footnotesize
\begin{equation}\nonumber
\begin{split}
\mathbbm{E}_{R_i,R_j}\left[D(x_i, x_j)\right] &= \mathbbm{E}_{R_i,R_j}\left[\mathbbm{E}_{x_k}\Big|\mathbbm{E}_{\,y_1,\,y_2}\;\left\{\left[p_k(1-p_i)+p_i(1-p_k)\right]\right. - \left.\left[p_k(1-p_j)+p_j(1-p_k)\right]\right\}\Big|\right]\\
&= \mathbbm{E}_{R_i,R_j}\left[\mathbbm{E}_{x_k}\Big|\mathbbm{E}_{\,y_1\leq\,y_2}\left[\left(p_i-p_j\right)(1-2p_k)\right]\Big|\right].
\end{split}
\end{equation}
}
Let us define three events:
\begin{itemize}[noitemsep,topsep=0pt]
\setlength{\itemsep}{0pt}
\setlength{\parskip}{0pt}
\setlength{\parsep}{0pt}
\item $\cale_1$: when $y_1,y_2\in[0,x_i]$ or $y_1,y_2\in[x_j,c]$.
\item $\cale_2$: when $y_1\in[0,x_i]$ and $y_2\in[x_j,c]$.
\item $\cale_3$: when $y_1\in[x_i,x_j]$ or $y_2\in[x_i,x_j]$.
\end{itemize}

We compute $(p_i-p_j)(1-2p_k)$ conditioned on the  three events (\ie $\left[\left(p_i-p_j\right)(1-2p_k)\mid \cale_{\ell}\right]$ for $\ell = 1,2,3$).

\mypara{Event $\cale_1$:} One can see that $
\mathbbm{E}\left[\left(p_i-p_j\right)(1-2p_k)\mid \cale_1\right] = 0$ by using a symmetric argument.\\
\mypara{Event $\cale_2$.}
We have $p_j \mid \cale_2 \geq  p_i-\frac{\epsilon}{2}+O(\epsilon^2).$\\
\mypara{Event $\cale_3$.} We have $\mathbbm{E}_{\,y_1\leq\,y_2}\left[\left(p_i-p_j\right)(1-2p_k)\mid \cale_3\right]\leq\frac{\epsilon}{2}+ O(\epsilon^2).$

$D(x_i,x_j)\leq c_2'\epsilon'$ follows from combining all results for $\{\cale_i\}_i$.

Next, we show the tail bound of $D(x_i,x_j)$, where $|D(x_i, x_j) - \E \left[D(x_i, x_j)\right]|$ decays exponentially in $m$. Observing that $S_k$'s are independent in any $F_{i,j}$, we have $\Pr\left[\,|D(x_i, x_j)-\mathbbm{E}\left[D(x_i, x_j)\right]| \geq \delta\mathbbm{E}\left[D(x_i, x_j)\right]\,\right] \leq n e^{-c_4(\delta) m}$ according to standard Chernoff bound. Combining the tail bound above with (\ref{equ:dcd}), we know there exists constants $c_1''$, $c_2''$ and $c_3$ such that
\begin{equation}\label{equ:cdcddc}
\Pr\left[\,c_1'' |x_i-x_j|^2 \leq D(x_i, x_j) \leq c_2'' |x_i-x_j|\,\right] = 1-ne^{-c_3m}.\end{equation}
We now interpret (\ref{equ:cdcddc}) in the context of  $(\alpha,\beta)$-nearest neighbor set. We analyze the $\beta$ part first. For any agent $x_j\in S_\beta = \left\{x_j: |x_i-x_j| > c_2\sqrt\epsilon = \sqrt{\epsilon/c_2''}\right\}$, we have,
\begin{equation}\nonumber
\begin{split}
\Pr\left[x_j\in\bfC_{\text{\calgo}}(x_i)\right] &= \Pr\left[D(x_i,x_j) > c_2\epsilon\right] \leq 1-\Pr\left[D(x_i,x_j) \leq c_2''|x_i-x_j| \right]\\
&\leq 1-\Pr\left[c_1''|x_i-x_j|^2 \leq D(x_i,x_j)\leq c_2''|x_i-x_j|\right] \leq ne^{-c_3m}.
\end{split}
\end{equation}
We also note that there are at most $n$ agents in $S_{\beta}$. By applying union bound to all agents in $S_{\beta}$, we got the conclusion of $\Pr\left[S_{\beta}\cap\bfC_{\text{\calgo}}(x_i) = \emptyset\right] \geq 1-n^2e^{-c_3m}$.

Letting $S_{\alpha} = \left\{x_j: |x_i-x_j| \leq c_1\epsilon = \epsilon/c_1''\right\}$, we get $\Pr\left[S_{\alpha}\cap\bfC_{\text{\calgo}}(x_i) = S_{\alpha}\right] \geq 1-n^2e^{-c_3m} $. Then, Theorem~\ref{thm:user} follows by applying union bound to $\alpha$'s and $\beta$'s conclusions. \end{proof}

\section{Nearest-Neighbor algorithm for alternatives}\label{sec:item}
This section designs an algorithm for finding $(\alpha, \beta)$-nearest neighbor set for an alternative $y_i$. While global information is needed for finding $(\alpha, \beta)$-nearest neighbor for agents, we need only local information for alternatives.
For exposition purpose, we focus on uniform $\cald_X$ and $\cald_Y$.

\myparab{Additional notations.} Define $d_x(y_i, y_j) = 2(I(R_x(y_i) - R_x(y_j) <  0) - 1)$. Here, $x \in [0, 1]$ represents an agent and $R_x$ represents its ranking over all alternatives. Intuitively, $d_x(y_i, y_j)$ is $1$ if $y_j\succ_x y_i  $ and is $-1$ otherwise.
Next, define $D_a(y_i, y_j) =\left|\frac 1 n \sum_{k \in [n]} (2I(R_k(y_i) - R_k(y_j) < 0) + 1)\right|$. Note that the terms in the summation are i.i.d. random variables, each of which has the same distribution with $d_a(y_i, y_j)$.

\myparab{Our algorithm and its intuition.} Our goal is to find an $(\alpha, \beta)$-nearest neighbor set  $\bfN(y_i)$ of $y_i$. To determine whether $y_i$ and $y_j$ are close, we shall check $D_a(y_i, y_j)$: when $y_i \approx y_j$, with probability exactly $\approx \frac 1 2$ that $R_k(y_i) < R_k(y_j)$, which implies
$\E[D_a(y_i, y_j)] \approx 0$. When $y_i$ and $y_j$ are  far away, then it is unlikely that $\E[d_x(y_i, y_j)] = 0$ (there is a catch; see below). As $D_a(y_i, y_j)$ is the mean of $n$ copies of independent $d_x(y_i, y_j)$, it will drift away from $0$.

\mypara{A ``bug'' due to symmetry.} One issue of the above argument is that
$|y_i - y_j|$ large does not always imply $\E d_x(y_i, y_j) \neq 0$. For any $y_i \neq \frac 1 2$,
when $y_j = 1 - y_i \neq y_i$, we have:
{\footnotesize
\begin{eqnarray*}
 \E[d_x(y_i, y_j)]
& = & \int_{0}^{\frac 1 2} d_{\frac 1 2 - \epsilon} (y_i, y_j) + d_{\frac 1 2 + \epsilon}(y_i, y_j) d \epsilon.
\end{eqnarray*}
}
One can check that $d_{\frac 1 2 - \epsilon} (y_i, y_j) + d_{\frac 1 2 + \epsilon}(y_i, y_j) = 0$ for any $\epsilon$. Therefore, $\E[d_x(y_i, y_j)] = 0$ for $y_j = 1 - y_i$.

\mypara{A two-step algorithm.} Let $\epsilon$ be a suitable parameter and $\ell = \Theta(1/\epsilon)$. We design a two-step algorithm to circumvent the symmetric bug:
\begin{itemize}[noitemsep,topsep=0pt]
\setlength{\itemsep}{0pt}
\setlength{\parskip}{0pt}
\setlength{\parsep}{0pt}
\item \emph{Step 1. Construction of candidate set:} We let $\bfC(y_i) = \{j: D_a(y_i, y_j) \leq 1/\ell\}$. All neighbors of $y_i$ are in $\bfC(y_i)$. 
\item \emph{Step 2. Filtering:} We design a procedure that can determine whether $y_j$ is close to $y_i$ or to $1 - y_i$ for all $y_j \in \bfC(y_i)$. Using this algorithm, and use the procedure filter out all the alternatives in $\bfC(y_i)$ that are not close to $y_i$. 
\end{itemize}
Details of step 1 and 2 will be given below. The performance of our algorithm is characterized by the following proposition.
\begin{proposition} Using the above notations, let $y_i$ be an arbitrary alternative. There exists an efficient algorithm that constructs an $(c_1 \epsilon, c_2 \sqrt{\epsilon})$-nearest neighbor set for any $\epsilon = \omega(\frac{\log n}{n})$. Here, $c_1$ and $c_2$ are two suitably chosen constants.
\end{proposition}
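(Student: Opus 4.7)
My plan is to analyze the sketched two-step algorithm by first deriving a closed-form expression for $\E[d_x(y_i, y_j)]$ to characterize the candidate set produced by step 1, and then designing an asymmetric filter for step 2 that breaks the $z \mapsto 1-z$ symmetry identified in the text. I would start from the Plackett--Luce choice probability and the identity $\tfrac{a-b}{a+b} = \tanh(\tfrac{\log a - \log b}{2})$ to get $\E[d_x(y_i,y_j) \mid x] = \tanh(\tfrac{|x-y_i|-|x-y_j|}{2})$. Splitting the $x$-integral at $\min(y_i,y_j)$ and $\max(y_i,y_j)$ over uniform $\cald_X$, the middle piece vanishes by the oddness of $\tanh$ (after shifting $u = x - (y_i+y_j)/2$), and for $y_i \leq y_j$ the outer pieces combine to the clean formula $\E[d_x(y_i,y_j)] = (1 - y_i - y_j)\tanh(\tfrac{y_j-y_i}{2})$. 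In magnitude this is $\Theta(|y_j - y_i|\cdot|(1-y_i) - y_j|)$ for small separations, exhibiting zero loci at both $y_j = y_i$ and $y_j = 1-y_i$ and thereby explaining precisely why step 2 is required.

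Next I would apply Hoeffding to the $n$ i.i.d.\ $\pm 1$ terms in $D_a(y_i,y_j)$, together with a union bound over the $O(m)$ alternatives, to get $|D_a - |\E d_x|| = o(\epsilon)$ uniformly with high probability. Combined with the closed form, the step-1 threshold $D_a \leq 1/\ell = \Theta(\epsilon)$ becomes equivalent to $|y_j - y_i|\cdot|(1-y_i)-y_j| \lesssim \epsilon$, so $\bfC(y_i)$ contains every $y_j$ within $c_1\epsilon$ of $y_i$ and excludes every $y_j$ that is more than $c_2\sqrt{\epsilon}$ from both $y_i$ and $1-y_i$. For step 2 I propose the rank-covariance statistic $\rho(y_i,y_j) = \tfrac{1}{n}\sum_k (R_k(y_i) - \bar R_i)(R_k(y_j) - \bar R_j)$: an agent $x_k$ near $y_i$ assigns small rank to both $y_i$ and any nearby $y_j$ but large rank to any $y_j$ near $1-y_i$, so $\E[\rho] > 0$ when $y_j$ is in the $y_i$-cluster and $\E[\rho] < 0$ when it is in the $(1-y_i)$-cluster, with both magnitudes bounded below by $\Omega(1)$ whenever $|y_i - 1/2|$ is itself $\Omega(1)$. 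A second Hoeffding gives $|\rho - \E\rho| = o(1)$ uniformly over $\bfC(y_i)$, and thresholding $\rho > 0$ therefore correctly classifies every non-boundary candidate.

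The hardest part will be the analysis of the sign and magnitude of $\E[\rho]$, because this involves the joint law of three ranks under Plackett--Luce rather than a pairwise identity like the $\tanh$ formula, and because two edge cases need to be handled. First, when $|y_i - 1/2|$ is itself $O(\sqrt{\epsilon})$, the symmetric cluster already lies inside the $c_2\sqrt{\epsilon}$-ball around $y_i$, so no filtering is needed and the algorithm returns $\bfC(y_i)$ as-is. Second, candidates in the $O(\sqrt{\epsilon})$ boundary band around either cluster may be arbitrarily included or excluded, which the $(\alpha,\beta)$-nearest neighbor definition explicitly permits. A final union bound over the $m$ candidates combines the step-1 and step-2 guarantees into the claimed $(c_1\epsilon, c_2\sqrt{\epsilon})$-nearest neighbor set with failure probability that is inverse-polynomial in $n$ under the assumption $\epsilon = \omega(\log n / n)$.
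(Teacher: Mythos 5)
Your Step 1 is essentially the paper's own argument: the closed form $\E[d_x(y_i,y_j)]=(1-y_i-y_j)\tanh\bigl(\tfrac{y_j-y_i}{2}\bigr)$ is exactly the identity behind Lemma~\ref{lem:itemsim} (the paper writes the $\tanh$ factor as $2b(\delta_{i,j})-1$ with $b(z)=\frac{1}{e^{-z}+1}$, and the $(1-y_i-y_j)$ factor as $1-r$), and the Hoeffding-plus-union-bound concentration of $D_a$ is the same. The genuine gap is in Step 2, in two places. First, your case split does not cover the regime $\sqrt{\epsilon}\ll|y_i-\tfrac12|\ll 1$: you invoke an $\Omega(1)$ separation of $\E[\rho]$ only when $|y_i-\tfrac12|=\Omega(1)$, and you waive filtering only when $|y_i-\tfrac12|=O(\sqrt{\epsilon})$ with a constant small enough that the mirror cluster sits inside the $c_2\sqrt{\epsilon}$-ball; in between, filtering is still mandatory but neither branch applies. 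The paper's analysis lives precisely there: it quantifies the gap between the two cluster statistics as $\Theta(|y_i-\tfrac12|)\geq c_0\delta$ in its ``harshest case'' $y_i=\tfrac12-2\delta$, and checks that $n=\Omega(\delta^{-2}\log n)$ samples suffice to resolve a gap of that (vanishing) size. You would need the analogous quantitative lower bound on your covariance gap.

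Second, the sign claim underlying your zero-threshold rule is unproven and is doubtful exactly where the problem is hard. For $y_j=1-y_i$ with $|y_i-\tfrac12|$ small, $R_k(y_i)$ and $R_k(y_j)$ share a large common fluctuation driven by $|x_k-\tfrac12|$ (an agent near $\tfrac12$ ranks both alternatives very high, an agent near $0$ or $1$ ranks both near $m/2$), so the covariance contains a positive common-mode term of order $m^2$ while the antisymmetric, cluster-separating contribution is only lower order in $|y_i-\tfrac12|$; the sign of $\E[\rho]$ need not flip between the two clusters, so thresholding at $0$ can misclassify. This is why the paper does not use a fixed cutoff: its statistic $S(y_i,y_j)$ has cluster means $Q_1=q^2+(1-q)^2$ and $Q_2=2q(1-q)$ that both tend to $\tfrac12$ as $y_i\to\tfrac12$, and it separates them by running 2-means on $\{S(y_i,y_j)\}_{y_j\in\bfC(y_i)}$, i.e., by comparing candidates to each other rather than to an absolute threshold. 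To repair your proof you need either that device (a data-dependent split) or a proof that the two conditional means of $\rho$ are separated by $\Omega(|y_i-\tfrac12|)\cdot m^2$ together with a threshold placed between them.
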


\noindent{\bf Step 1:  Construction of candidate set. }Let $\mu_x(y_i, y_j) = \E[d_x(y_i, y_j)]$.

\begin{lemma}\label{lem:itemsim} Let $\tau_i = |\frac 1 2 - y_i|$ (for all $i \in [m]$) and $\delta_{i, j} = |\tau_i - \tau_j|$. Then there exist constants $c_1$ and $c_2$ such that
\begin{equation}
c_1 \delta^2_{i, j} \leq \mu_x(y_i, y_j) \leq c_2 \delta_{i, j}.
\end{equation}
\end{lemma}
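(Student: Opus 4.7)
The plan is to derive $\mu_x(y_i,y_j)$ in closed form via Plackett-Luce and then match the resulting expression against $\delta_{i,j}$. Since the algorithm only uses the magnitude $|D_a|$, I will in fact bound $|\mu_x(y_i,y_j)|$; the stated inequalities follow. First, for a two-item comparison the model gives
\begin{equation}
\Pr[R_x(y_i)<R_x(y_j)\mid x]=\frac{e^{-|x-y_i|}}{e^{-|x-y_i|}+e^{-|x-y_j|}},
\end{equation}
so $\E[d_x(y_i,y_j)\mid x]=\tanh\!\bigl(\tfrac{|x-y_j|-|x-y_i|}{2}\bigr)$. Taking the outer expectation with $x\sim\text{Uniform}[0,1]$ and substituting $t=x-\tfrac12$, $a=y_i-\tfrac12$, $b=y_j-\tfrac12$ reduces the task to evaluating $\int_{-1/2}^{1/2}\tanh(h(t)/2)\,dt$ where $h(t)=|t-b|-|t-a|$.

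Next, I would exploit the piecewise-linear structure of $h$. Assume WLOG $a\geq b$. Then $h\equiv b-a$ on $[-\tfrac12,b]$, $h\equiv a-b$ on $[a,\tfrac12]$, and $h(t)=2t-(a+b)$ on $[b,a]$. The key observation is that on the intermediate interval the substitution $s=t-(a+b)/2$ yields the integral of the odd function $\tanh$ over an interval symmetric about $0$, which vanishes. Combining the two flat pieces gives the clean closed form
\begin{equation}
\mu_x(y_i,y_j)=-(a+b)\tanh\!\left(\tfrac{a-b}{2}\right).
\end{equation}

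Finally, I would translate $(a,b)$ into $(\tau_i,\tau_j)$ via case analysis. If $a$ and $b$ have the same sign (say both $\geq 0$ with $a\geq b$), then $\delta_{i,j}=a-b$ and $a+b\in[\delta_{i,j},1]$, so $|\mu_x|=(a+b)\tanh(\delta_{i,j}/2)$. If they have opposite signs (say $a\geq 0\geq b$), then $\delta_{i,j}=|a+b|$ and $a-b=a+|b|\in[\delta_{i,j},1]$, so $|\mu_x|=\delta_{i,j}\cdot\tanh((a+|b|)/2)$. In either regime the elementary bounds $\tfrac{u}{2}\geq\tanh(u/2)\geq c\,u$ for $u\in[0,1]$ (with a suitable constant $c>0$) give
\begin{equation}
c_1\delta_{i,j}^2\;\leq\;|\mu_x(y_i,y_j)|\;\leq\;c_2\,\delta_{i,j}.
\end{equation}

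The main obstacle I anticipate is the opposite-sign case, where the naive pairwise distance $|y_i-y_j|$ can be $\Theta(1)$ while $\delta_{i,j}$ is arbitrarily small; a crude triangle-inequality bound such as $|\mu_x|\leq|y_i-y_j|/2$ is therefore far too weak for the upper bound $|\mu_x|\leq c_2\delta_{i,j}$. The vanishing-middle-integral observation is precisely the cancellation that supplies the needed sharpening (it is exactly the mechanism behind the $y_j=1-y_i$ ``bug'' described before the lemma). Once the closed form is obtained, the remaining inequalities reduce to one-variable monotonicity estimates on $\tanh$ and are routine.
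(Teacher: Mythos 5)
Your proof is correct and rests on the same key mechanism as the paper's: the reflection symmetry about $(y_i+y_j)/2$ kills the contribution from the interval between the two reflected points, leaving only the flat region where the pairwise-win probability is constant, which is exactly the paper's identity $\mu_x(y_i,y_j)=(2b(\delta_{i,j})-1)(1-y_i-y_j)$ rewritten in your centered coordinates as $-(a+b)\tanh\left((a-b)/2\right)$. Your unified closed form is a cleaner packaging that handles the paper's same-side/opposite-side case split (and its Taylor-expansion estimates) in one stroke, and your explicit remark that one should really bound $|\mu_x|$ is a fair correction of the paper's loose sign conventions, but the underlying argument is the same.
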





{\noindent\bf Step 2: Filtering out unwanted alternatives.} 
Now we have a candidate set $\bfC(y_i)$ such that for any $y_j \in \bfC(y_i)$, $y_j$ is either close to $y_i$ or $1 - y_i$. Next, we describe an algorithm that
eliminates elements in $\bfC(y_i)$ that's not close to $y_i$. We now formally describe the problem.

\myparab{The Split-cluster problem.} Let $\bfC(y_i)$ be a set such that for any $y_j \in \bfC(y_i)$, either $|y_j - y_i| \leq \delta$ or $|y_j - (1- y_i)| \leq \delta$, where $\delta = o(1)$. Our goal is to find all $y_j \in \bfC(y_i)$ such that $|y_j - y_i| \leq 5\delta$.

Our split-cluster algorithm is shown in Algorithm~\ref{alg:split}, with analysis and remarks shown in Appendix.

\begin{lemma}\label{lem:cluster} When $n = \Omega(1/\delta^2)$, Algorithm~\ref{alg:split} returns all $y_j \in \bfC$ such that $|y_i - y_j| \leq 5 \delta$.
\end{lemma}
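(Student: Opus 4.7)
The plan is to leverage the fact that although \emph{unordered} pairwise statistics between items symmetric around $1/2$ are uninformative (the ``symmetry bug'' that motivated this two-step algorithm in the first place), once we fix a pair of \emph{anchor} alternatives in $\bfC(y_i)$ that lie on opposite sides of $1/2$, the remaining items can be classified by the \emph{sign} of correlations of their rank-positions with the anchors'. So I would first argue that Algorithm~\ref{alg:split} begins by selecting two anchors $y_a, y_b \in \bfC(y_i)$ whose pairwise statistic $D_a(y_a, y_b)$ is maximal (or above a threshold). By Lemma~\ref{lem:itemsim}, $D_a$ grows with $\delta_{a,b} = |\tau_a - \tau_b|$, and because every $y_j \in \bfC(y_i)$ lies within $\delta$ of either $y_i$ or $1-y_i$, the maximizing pair must have one element within $O(\delta)$ of $y_i$ and the other within $O(\delta)$ of $1-y_i$ (unless $\bfC(y_i)$ already contains only one side, in which case no split is needed).

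Once the anchors are fixed, I would analyze the key classification statistic, which for each candidate $y_c \in \bfC(y_i)$ takes the form
\begin{equation}
T(y_c) = \frac{1}{n}\sum_{k=1}^{n}\sign\bigl((R_k(y_c)-R_k(y_a))(R_k(y_c)-R_k(y_b))\bigr),
\end{equation}
or an equivalent $\pm 1$ combination of the two pairwise comparisons $(y_c,y_a)$ and $(y_c,y_b)$. Unlike the raw pairwise comparison between $y_c$ and its mirror image, this quantity is \emph{not} cancelled by the $x \mapsto 1-x$ symmetry: conditioned on the anchors, agents on the $y_a$-side and the $y_b$-side contribute in the \emph{same} direction if $y_c$ lies on one particular side, and in opposite directions if $y_c$ lies on the other. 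A short Plackett--Luce calculation (analogous to the one in the proof of Lemma~\ref{lem:itemsim}, replacing the offending symmetric terms) should show that $\E[T(y_c)]$ is a monotone function of the signed displacement of $y_c$ from $1/2$, with slope $\Theta(1)$ in a $\Theta(1)$-neighborhood and separation of order $|1-2y_i| = \Omega(1)$ between the two sides (since the symmetry bug only appeared because $y_i$ was not assumed to equal $1/2$ in the first place; the edge case $y_i \approx 1/2$ is harmless because then $1-y_i \approx y_i$ and every $y_j$ in $\bfC$ is already within $O(\delta)$ of $y_i$).

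The main obstacle is the concentration step and the handling of items that lie genuinely close to the symmetry boundary. By Hoeffding's inequality applied to the $n$ i.i.d.\ summands in $T(y_c)$, we obtain $|T(y_c)-\E[T(y_c)]| \le O(\sqrt{\log n / n})$ with probability $1 - n^{-\Omega(1)}$, which is $o(\delta)$ precisely when $n = \Omega(1/\delta^2)$ (absorbing a $\log n$ factor into the constant). Combined with the $\Theta(1)$ separation of $\E[T(y_c)]$ between the two sides, a single sign threshold on $T(y_c)$ correctly classifies every $y_c$ that lies more than, say, $2\delta$ from the midpoint. The $5\delta$ slack in the statement of Lemma~\ref{lem:cluster} is exactly what is needed to absorb the boundary cases: items within $[0,\delta]$ of $y_i$ are certainly returned, items in $[\delta, 5\delta]$ that the classifier keeps are still within $5\delta$ of $y_i$ as required, and items close to $1-y_i$ are correctly excluded. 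Taking a union bound over the at most $m$ candidates in $\bfC(y_i)$ concludes the argument.
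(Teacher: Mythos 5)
Your proposal analyzes a different algorithm from the one the lemma is about. Algorithm~\ref{alg:split} does not select anchors: for each $y_j \in \bfC(y_i)$ it computes the single statistic $S(y_i,y_j) = \frac{1}{n}\sum_{k}I(\text{$y_i$ and $y_j$ fall in the same half of $R_{x_k}$})$, runs $2$-means on the resulting one-dimensional point set, and returns the cluster with the larger centroid. The paper's proof shows that $S(y_i,y_j)$ concentrates near $Q_1 = q^2+(1-q)^2$ when $y_j \approx y_i$ and near $Q_2 = 2q(1-q)$ when $y_j \approx 1-y_i$, where $q = \Pr[R_x(y_i)\le m/2]$; the symmetry bug is broken because mirror-image items land in opposite halves of the ranking for agents away from $\tfrac12$. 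Since the lemma asserts something about the output of that specific procedure, establishing correctness of an anchor-plus-sign-threshold classifier does not prove it. Moreover your anchor step is itself fragile: if $\bfC(y_i)$ happens to contain no items near $1-y_i$, the $D_a$-maximizing pair consists of two items near $y_i$ and the subsequent classification is unspecified, even though you must still return everything within $5\delta$.

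The more substantive gap is your separation claim. You assert the two sides are separated by $|1-2y_i| = \Omega(1)$ and dismiss $y_i \approx \tfrac12$ on the grounds that then no filtering is needed. But there is an intermediate regime --- the paper singles out $y_i = \tfrac12 - 2\delta$ as the ``most harsh case'' --- where $|y_i-(1-y_i)|$ exceeds the $5\delta$ tolerance, so items near $1-y_i$ genuinely must be removed, yet the gap between the two cluster centers is only of order $\delta$ (the paper obtains this from $|d\,\E_x[\cale_1(x,y_i)]/dy| = \Theta(1)$, hence $\theta \ge c_0\delta$). This shrinking gap is exactly why the hypothesis $n = \Omega(1/\delta^2)$ is needed at all; under a genuine $\Theta(1)$ separation a constant number of agents would suffice and the quantitative hypothesis would be vacuous. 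Your Hoeffding step correctly notes that $n = \Omega(1/\delta^2)$ resolves gaps of order $\delta$, but this is inconsistent with the $\Theta(1)$ separation you invoke, and without a proof that your statistic $T(y_c)$ retains an $\Omega(\delta)$ gap as $y_i \to \tfrac12$ (where $y_a$, $y_b$, and $y_c$ all collapse toward one another) the argument does not close. Finally, a fixed sign threshold at $0$ presumes $\E[T(y_c)]$ changes sign between the two sides; the paper instead confronts the fact that the relevant thresholds $Q_1,Q_2$ are unknown and resolves it with the $2$-means step, a point your proposal never addresses.
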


\begin{algorithm}[tb]
   \caption{Split-cluster}
   \label{alg:split}
   {\bfseries Input:} a set $\bfC(y_i)$ such that for any $y_j \in \bfC(y_i)$, $|y_j - y_i| < \epsilon$ or $|y_j - (1-y_i)| < \epsilon$.\\
   Let $\calq = \{S(y_i, y_j): y_j \in \bfC(y_i)\}$\\
   $\calq_1, \calq_2 = \mbox{k-means}(\calq)$ ($k = 2$).
   \% \emph{Comment: assume $\calq_2$'s centroid is smaller}.\\
   Return $\bfC^+= \{y_j: S(y_i, y_j) \in \calq_1\}$.
\end{algorithm}

\section{Numeric validation}
This section presents results of experiments based on synthetic data to validate our theoretical results. We randomly generated 1200 agents and 6000 alternatives according to $\cald_X = \cald_Y = \text{Uniform}\left(\left[0,5\right]\right)$. Then we introduce a new agent $x_{n + 1}$ and reveal its partial ranking to the system. Our goal is to predict $\mathbbm{P}\left[y_i\geq_{n+1}y_j\right]$. We examine three algorithms: \emph{(i)} \wrongalgo, \emph{(ii)} \calgo, \emph{(iii)} Ground-truth (\ie directly using the nearest neighbors of an agent $x_{n+1}$ in latent space). The ground-truth algorithm cannot be implemented in practice and only serves as a optimal bound for any kNN based algorithms. We consider $k \in [20, 500]$ ($k$ is the number of neighbors to keep). See Figure~\ref{fig:Exp}. One can see that \wrongalgo consistently has bad performance whereas \calgo's performance is very close to the lower bound. 

Figure~\ref{fig:Exp}(c) shows experiments for high-dimensional latent spaces ($d\in[1,10]$) under the same setting as 1D except $\cald_X = \cald_Y = \text{Uniform}\left(\left[0,5\big/{\sqrt d}\right]^d\right)$. We see \wrongalgo consistently has worse performance than \calgo, whose performance is very close to ground truth.
\begin{figure}
\centering
\includegraphics[width = 1\textwidth]{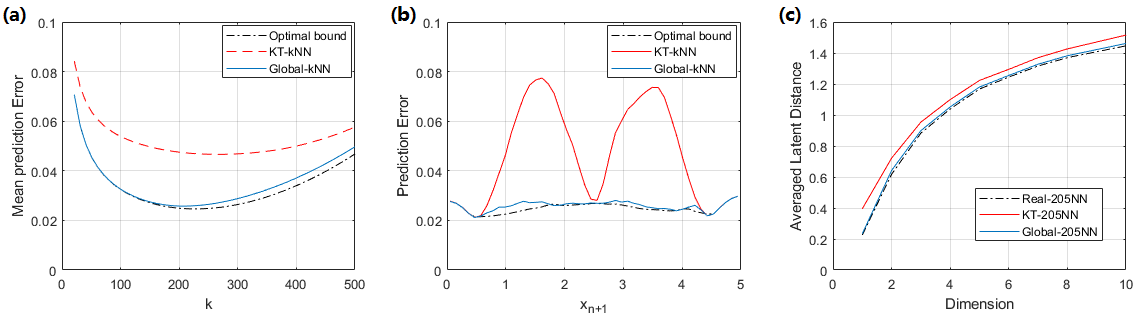}
\caption{\footnotesize {\textbf{(a)} Averaged prediction error for different $k \in [20, 500]$. The optimal errors for \wrongalgo, \calgo, and optimal bounds are 0.0466, 0.0258, and 0.0246 respectively. \textbf{(b)} Prediction errors when $k = 205$ (the best $k$). The performance discrepancies between \wrongalgo and \calgo become more pronounced for large $k$, especially when $x_{n + 1}$ is close to 1.25 or 3.75. \textbf{(c)} Averaged distance from $x_{n+1}$ to 205-nearest neighbors in $d$ dimensional latent spaces. }}
\label{fig:Exp}
\end{figure}

\section{Additional related work}
\myparab{Nonparametric learning in practice.}
Our model is sometimes considered as a non-parametric model. 
Nonparametric preference learning methods are widely applied in practice but little is known about their theoretical guarantees. 
Our work is related to the recent line of work in preference completion~\cite{McNee:2006,Liu:2008:ERA:1390334.1390351,Cremonesi:2010,Wang:2012:AVS:2396761,Wang:2014:VNF:2648782.2542048,huang2015listwise,Cheng:2017,SamuelsS17}. Some most recent algorithms (\eg~\cite{Wang:2014:VNF:2648782.2542048,huang2015listwise}) have impressive performance in practice, but have no theoretical explanations justifying the successes.

\myparab{Non-ranking observations.} There is a rich literature (\eg see~\cite{Herlocker1999:An-algorithmic,Liu2008:EigenRank,Bobadilla2013:Recommender,Lee2016:Blind} and references therein) 
on learning information about $\{u(x_i y_j)\}_{i,j}$ based on partial observations. For example, in the classical collaborative filtering problem, noisy observations of $u(x_i, y_j)$ (\eg the observation is $u(x_i, y_j)+ \epsilon$ for some white noise $\epsilon$). 
These results are not comparable to ours. 
Other work~\cite{Kleinberg2003:Convergent,Kleinberg2008:Using} assumes an observation model  related to ours: an alternative $j$ is more likely to be used/evaluated by an agent $i$ if $u(x_i, y_j)$ is high. 

\myparab{Low-rank assumption.} The work \cite{Park2015:Preferences,Gunasekar2016:Preference} assumes the matrix $[u(x_i, y_j)]_{i,j}$ (or its expectation) is low ranks. This matrix is full rank in all the utility functions and models considered in our program. Furthermore, their loss functions are not in terms of rank correlations (the most natural choice). 

\myparab{Parametric inference.}
Parametric preference learning has been extensively studied in machine learning, especially learning to rank~\cite{Cheng10:Label,Mollica16:Bayesian,Negahban17:Rank,Azari12:Random,Azari14:Computing,Azari13:Preference,Azari13:Generalized,Maystre2015:Fast,Khetan16:Data,Hughes15:Computing,Zhao16:Learning}. These method often assume the existence of a parametric model, usually Random Utility Model or Mallows' model.


\section{Concluding remarks}
This paper introduced a natural learning-to-rank model, and showed that under this model a widely-used KT-distance based kNN algorithm failed to find similar agents (users). To fix the problem, we introduced a new set of features for agents that relies on the ranking of other agents (\ie relying on ``global information''). We also design an algorithm for finding similar alternatives, based on using only local information. The two algorithmic results showed that the ``item-similarity'' problem is fundamentally different from the ``user-similarity'' problem. 

\myparab{Generalization.} We made two assumptions in our analysis: \emph{(i)} we observe each agent's full ranking over $[m]$; and \emph{(ii)} $\cald_X$ and $\cald_Y$ are in 1-dimensional space. Relaxing assumption (i) is straightforward because we need only develop specialized tail bounds for $|D(x_i, x_j) - \E D(x_i, x_j)|$ (discussed in Section~\ref{sec:userupper}). Relaxing assumption (ii), however, is challenging because our analysis heavily relies on symmetric properties over the 1-dimensional lines, many of which break in high-dimensional space. We note that in practice, the improvement of predictive power using high-dimensional models is usually incremental~\cite{li2017world}.

\myparab{Limitation.} RBF utilities are not universally applicable in all recommender systems (\eg in some circumstances, ``cosine similarities'' are more suitable utility functions). This paper's major contribution is the theoretical investigation of a fundamental learning-to-rank problem. 
It remains a future work to apply our results to understand their impacts on practical recommender systems.



\bibliographystyle{plainnat}

\begin{thebibliography}{39}
\providecommand{\natexlab}[1]{#1}
\providecommand{\url}[1]{\texttt{#1}}
\expandafter\ifx\csname urlstyle\endcsname\relax
  \providecommand{\doi}[1]{doi: #1}\else
  \providecommand{\doi}{doi: \begingroup \urlstyle{rm}\Url}\fi

\bibitem[Abraham et~al.(2013)Abraham, Chechik, Kempe, and
  Slivkins]{Abraham:2013}
Ittai Abraham, Shiri Chechik, David Kempe, and Aleksandrs Slivkins.
\newblock Low-distortion inference of latent similarities from a multiplex
  social network.
\newblock In \emph{Proceedings of the Twenty-fourth Annual ACM-SIAM Symposium
  on Discrete Algorithms}, SODA '13, pages 1853--1883, Philadelphia, PA, USA,
  2013. Society for Industrial and Applied Mathematics.

\bibitem[Ailon(2007)]{Ailon07:Aggregation}
Nir Ailon.
\newblock Aggregation of partial rankings, p-ratings and top-m lists.
\newblock In \emph{Proceedings of the Annual ACM-SIAM Symposium on Discrete
  Algorithms (SODA)}, 2007.

\bibitem[Alon(2006)]{Alon06:Ranking}
Noga Alon.
\newblock Ranking tournaments.
\newblock \emph{SIAM Journal of Discrete Mathematics}, 20:\penalty0 137--142,
  2006.

\bibitem[Azari~Soufiani et~al.(2012)Azari~Soufiani, Parkes, and
  Xia]{Azari12:Random}
Hossein Azari~Soufiani, David~C. Parkes, and Lirong Xia.
\newblock Random utility theory for social choice.
\newblock In \emph{Proceedings of Advances in Neural Information Processing
  Systems (NIPS)}, pages 126--134, Lake Tahoe, NV, USA, 2012.

\bibitem[Azari~Soufiani et~al.(2013{\natexlab{a}})Azari~Soufiani, Chen, Parkes,
  and Xia]{Azari13:Generalized}
Hossein Azari~Soufiani, William Chen, David~C. Parkes, and Lirong Xia.
\newblock Generalized method-of-moments for rank aggregation.
\newblock In \emph{Proceedings of Advances in Neural Information Processing
  Systems (NIPS)}, Lake Tahoe, NV, USA, 2013{\natexlab{a}}.

\bibitem[Azari~Soufiani et~al.(2013{\natexlab{b}})Azari~Soufiani, Parkes, and
  Xia]{Azari13:Preference}
Hossein Azari~Soufiani, David~C. Parkes, and Lirong Xia.
\newblock {Preference Elicitation For General Random Utility Models}.
\newblock In \emph{Proceedings of Uncertainty in Artificial Intelligence
  (UAI)}, Bellevue, Washington, USA, 2013{\natexlab{b}}.

\bibitem[Azari~Soufiani et~al.(2014)Azari~Soufiani, Parkes, and
  Xia]{Azari14:Computing}
Hossein Azari~Soufiani, David~C. Parkes, and Lirong Xia.
\newblock {Computing Parametric Ranking Models via Rank-Breaking}.
\newblock In \emph{Proceedings of the 31st International Conference on Machine
  Learning}, Beijing, China, 2014.

\bibitem[Berry et~al.(1995)Berry, Levinsohn, and Pakes]{Berry95:Automobile}
Steven Berry, James Levinsohn, and Ariel Pakes.
\newblock Automobile prices in market equilibrium.
\newblock \emph{Econometrica}, 63\penalty0 (4):\penalty0 841--890, 1995.

\bibitem[Bobadilla et~al.(2013)Bobadilla, Ortega, Hernando, and
  Guti{\'e}Rrez]{Bobadilla2013:Recommender}
J.~Bobadilla, F.~Ortega, A.~Hernando, and A.~Guti{\'e}Rrez.
\newblock Recommender systems survey.
\newblock \emph{Knowledge-Based Systems}, 46:\penalty0 109--132, 2013.

\bibitem[Cheng et~al.(2017)Cheng, Wang, Ma, Sun, and Xiong]{Cheng:2017}
Peizhe Cheng, Shuaiqiang Wang, Jun Ma, Jiankai Sun, and Hui Xiong.
\newblock Learning to recommend accurate and diverse items.
\newblock In \emph{Proceedings of the 26th International Conference on World
  Wide Web}, WWW '17, pages 183--192, 2017.
\newblock ISBN 978-1-4503-4913-0.

\bibitem[Cheng et~al.(2010)Cheng, Dembczynski, and
  H{\"u}llermeier]{Cheng10:Label}
Weiwei Cheng, Krzysztof~J. Dembczynski, and Eyke H{\"u}llermeier.
\newblock Label ranking methods based on the plackett-luce model.
\newblock \emph{Proceedings of the 27th International Conference on Machine
  Learning (ICML-10)}, pages 215--222, 2010.

\bibitem[Conitzer et~al.(2006)Conitzer, Davenport, and
  Kalagnanam]{Conitzer06:Kemeny}
Vincent Conitzer, Andrew Davenport, and Jayant Kalagnanam.
\newblock Improved bounds for computing {K}emeny rankings.
\newblock In \emph{Proceedings of the National Conference on Artificial
  Intelligence (AAAI)}, pages 620--626, Boston, MA, USA, 2006.

\bibitem[Cremonesi et~al.(2010)Cremonesi, Koren, and Turrin]{Cremonesi:2010}
Paolo Cremonesi, Yehuda Koren, and Roberto Turrin.
\newblock Performance of recommender algorithms on top-n recommendation tasks.
\newblock In \emph{Proceedings of the Fourth ACM Conference on Recommender
  Systems}, RecSys '10, pages 39--46. ACM, 2010.
\newblock ISBN 978-1-60558-906-0.

\bibitem[Gunasekar et~al.(2016)Gunasekar, Koyejo, and
  Ghosh]{Gunasekar2016:Preference}
Suriya Gunasekar, Oluwasanmi~O. Koyejo, and Joydeep Ghosh.
\newblock {Preference Completion from Partial Rankings}.
\newblock In \emph{Advances in Neural Information Processing Systems}, 2016.

\bibitem[Herlocker et~al.(1999)Herlocker, Konstan, Borchers, and
  Riedl]{Herlocker1999:An-algorithmic}
Jonathan~L. Herlocker, Joseph~A. Konstan, Al~Borchers, and John Riedl.
\newblock An algorithmic framework for performing collaborative filtering.
\newblock In \emph{Proceedings of the 22nd annual international ACM SIGIR
  conference on Research and development in information retrieval}, pages
  230--237, 1999.

\bibitem[Huang et~al.(2015)Huang, Wang, Liu, Ma, Chen, and
  Veijalainen]{huang2015listwise}
Shanshan Huang, Shuaiqiang Wang, Tie-Yan Liu, Jun Ma, Zhumin Chen, and Jari
  Veijalainen.
\newblock Listwise collaborative filtering.
\newblock In \emph{Proceedings of the 38th International ACM SIGIR Conference
  on Research and Development in Information Retrieval}, pages 343--352. ACM,
  2015.

\bibitem[Hughes et~al.(2015)Hughes, Hwang, and Xia]{Hughes15:Computing}
David Hughes, Kevin Hwang, and Lirong Xia.
\newblock {Computing Optimal Bayesian Decisions for Rank Aggregation via MCMC
  Sampling}.
\newblock In \emph{Proceedings of the Conference on Uncertainly in Artificial
  Intelligence (UAI)}, pages 385--394, 2015.

\bibitem[Katz{-}Samuels and Scott(2017)]{SamuelsS17}
Julian Katz{-}Samuels and Clayton Scott.
\newblock Nonparametric preference completion.
\newblock \emph{CoRR}, abs/1705.08621, 2017.
\newblock URL \url{http://arxiv.org/abs/1705.08621}.

\bibitem[Kenyon-Mathieu and Schudy(2007)]{Kenyon07:How}
Claire Kenyon-Mathieu and Warren Schudy.
\newblock {How to Rank with Few Errors: A PTAS for Weighted Feedback Arc Set on
  Tournaments}.
\newblock In \emph{Proceedings of the Thirty-ninth Annual ACM Symposium on
  Theory of Computing}, pages 95--103, San Diego, California, USA, 2007.

\bibitem[Khetan and Oh(2016)]{Khetan16:Data}
Ashish Khetan and Sewoong Oh.
\newblock Data-driven rank breaking for efficient rank aggregation.
\newblock In \emph{Proceedings of the 33rd International Conference on Machine
  Learning}, volume~48, 2016.

\bibitem[Kleinberg and Sandler(2003)]{Kleinberg2003:Convergent}
Jon Kleinberg and Mark Sandler.
\newblock Convergent algorithms for collaborative filtering.
\newblock In \emph{Proceedings of the 4th ACM conference on Electronic
  commerce}, pages 1--10, 2003.

\bibitem[Kleinberg and Sandler(2008)]{Kleinberg2008:Using}
Jon Kleinberg and Mark Sandler.
\newblock Using mixture models for collaborative filtering.
\newblock \emph{Journal of Computer and System Sciences}, 74\penalty0
  (1):\penalty0 49--69, 2008.

\bibitem[Lee et~al.(2016)Lee, Li, Shah, and Song]{Lee2016:Blind}
Christina~E. Lee, Yihua Li, Devavrat Shah, and Dogyoon Song.
\newblock {Blind Regression: Nonparametric Regression for Latent Variable
  Models via Collaborative Filtering}.
\newblock In \emph{Advances in Neural Information Processing Systems}, 2016.

\bibitem[Li et~al.(2017)Li, Wong, Liu, and Kanade]{li2017world}
Cheng Li, Felix~MF Wong, Zhenming Liu, and Varun Kanade.
\newblock From which world is your graph.
\newblock In \emph{Advances in Neural Information Processing Systems}, pages
  1468--1478, 2017.

\bibitem[Liu and Yang(2008{\natexlab{a}})]{Liu2008:EigenRank}
Nathan~N. Liu and Qiang Yang.
\newblock {EigenRank: a ranking-oriented approach to collaborative filtering}.
\newblock In \emph{Proceedings of the 31st annual international ACM SIGIR
  conference on Research and development in information retrieval}, pages
  83--90, 2008{\natexlab{a}}.

\bibitem[Liu and Yang(2008{\natexlab{b}})]{Liu:2008:ERA:1390334.1390351}
Nathan~N. Liu and Qiang Yang.
\newblock Eigenrank: A ranking-oriented approach to collaborative filtering.
\newblock In \emph{Proceedings of the 31st Annual International ACM SIGIR
  Conference on Research and Development in Information Retrieval}, SIGIR '08,
  pages 83--90, 2008{\natexlab{b}}.
\newblock ISBN 978-1-60558-164-4.

\bibitem[Liu(2009)]{Liu:2009:LRI}
Tie-Yan Liu.
\newblock Learning to rank for information retrieval.
\newblock \emph{Found. Trends Inf. Retr.}, 3\penalty0 (3):\penalty0 225--331,
  March 2009.
\newblock ISSN 1554-0669.

\bibitem[Luce(1977)]{Luce77:Choice}
R.~Duncan Luce.
\newblock The choice axiom after twenty years.
\newblock \emph{Journal of Mathematical Psychology}, 15\penalty0 (3):\penalty0
  215--233, 1977.

\bibitem[Maystre and Grossglauser(2015)]{Maystre2015:Fast}
Lucas Maystre and Matthias Grossglauser.
\newblock {Fast and accurate inference of Plackett-Luce models}.
\newblock In \emph{Proceedings of the 28th International Conference on Neural
  Information Processing Systems}, pages 172--180, 2015.

\bibitem[McNee et~al.(2006)McNee, Riedl, and Konstan]{McNee:2006}
Sean~M. McNee, John Riedl, and Joseph~A. Konstan.
\newblock Being accurate is not enough: How accuracy metrics have hurt
  recommender systems.
\newblock In \emph{CHI '06 Extended Abstracts on Human Factors in Computing
  Systems}, CHI EA '06, pages 1097--1101, New York, NY, USA, 2006. ACM.
\newblock ISBN 1-59593-298-4.

\bibitem[Mollica and Tardella(2016)]{Mollica16:Bayesian}
Cristina Mollica and Luca Tardella.
\newblock Bayesian {P}lackett--{L}uce mixture models for partially ranked data.
\newblock \emph{Psychometrika}, pages 1--17, 2016.

\bibitem[Negahban et~al.(2017)Negahban, Oh, and Shah]{Negahban17:Rank}
Sahand Negahban, Sewoong Oh, and Devavrat Shah.
\newblock Rank centrality: Ranking from pairwise comparisons.
\newblock \emph{Operations Research}, 65\penalty0 (1):\penalty0 266--287, 2017.

\bibitem[Park et~al.(2015)Park, Neeman, Zhang, Sanghavi, and
  Dhillon]{Park2015:Preferences}
Dohyung Park, Joe Neeman, Jin Zhang, Sujay Sanghavi, and Inderjit~S. Dhillon.
\newblock {Preference Completion: Large-scale Collaborative Ranking from
  Pairwise Comparisons}.
\newblock In \emph{Proceedings of the 32nd International Conference on
  International Conference on Machine Learning}, pages 1907--1916, 2015.

\bibitem[Plackett(1975)]{Plackett75:Analysis}
Robin~L. Plackett.
\newblock The analysis of permutations.
\newblock \emph{Journal of the Royal Statistical Society. Series C (Applied
  Statistics)}, 24\penalty0 (2):\penalty0 193--202, 1975.

\bibitem[Sarwar et~al.(2001)Sarwar, Karypis, Konstan, and
  Riedl]{sarwar2001item}
Badrul Sarwar, George Karypis, Joseph Konstan, and John Riedl.
\newblock Item-based collaborative filtering recommendation algorithms.
\newblock In \emph{Proceedings of the 10th international conference on World
  Wide Web}, pages 285--295. ACM, 2001.

\bibitem[Scholkopf and Smola(2001)]{scholkopf2001learning}
Bernhard Scholkopf and Alexander~J Smola.
\newblock \emph{Learning with kernels: support vector machines, regularization,
  optimization, and beyond}.
\newblock MIT press, 2001.

\bibitem[Wang et~al.(2012)Wang, Sun, Gao, and Ma]{Wang:2012:AVS:2396761}
Shuaiqiang Wang, Jiankai Sun, Byron~J. Gao, and Jun Ma.
\newblock Adapting vector space model to ranking-based collaborative filtering.
\newblock In \emph{Proceedings of the 21st ACM International Conference on
  Information and Knowledge Management}, CIKM '12, pages 1487--1491, 2012.
\newblock ISBN 978-1-4503-1156-4.

\bibitem[Wang et~al.(2014)Wang, Sun, Gao, and
  Ma]{Wang:2014:VNF:2648782.2542048}
Shuaiqiang Wang, Jiankai Sun, Byron~J. Gao, and Jun Ma.
\newblock Vsrank: A novel framework for ranking-based collaborative filtering.
\newblock \emph{ACM Trans. Intell. Syst. Technol.}, 5\penalty0 (3):\penalty0
  51:1--51:24, July 2014.
\newblock ISSN 2157-6904.

\bibitem[Zhao et~al.(2016)Zhao, Piech, and Xia]{Zhao16:Learning}
Zhibing Zhao, Peter Piech, and Lirong Xia.
\newblock {Learning Mixtures of Plackett-Luce Models}.
\newblock In \emph{Proceedings of the 33rd International Conference on Machine
  Learning (ICML-16)}, 2016.

\end{thebibliography}
{\footnotesize 
}

\newpage
\appendix
\section{Missing analysis for analyzing \wrongalgo}\label{a:wrong}
This section presents the missing analysis in Section~\ref{sec:wronglower}. We have the following three major lemmas. 
\begin{lemma}\label{lemma:wrong1}
Let $\cald_X$ be a uniform distribution on $[0, 1]$, $x_{i} \leq 0.25$, and $O_{i} = [m]$. Let $x \in [0,x_{i}]$ be an arbitrary agent and $R_x$ be the ranking of the agent $x$ (which is a random variable conditioned on $x$). We have:
$$x^* = \argmin_{x\geq 1-x_{n+1}} \E_{Y} \left[\nkt(R_{n +1}, R_x)\right]\} = 1-x_{n+1}.$$
\end{lemma}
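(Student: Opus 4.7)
The plan is to show that $\calf(x) := \E_Y[\nkt(R_{n+1}, R_x) \mid x_{n+1}]$ is strictly increasing on the open interval $(1 - x_{n+1},\, 1)$; continuity then forces the minimum on $[1 - x_{n+1},\, 1]$ to be attained at the left endpoint $x^* = 1 - x_{n+1}$. The approach parallels the derivative-based argument already used for Theorem~\ref{thm:wrong}, but the sign analysis turns out to be cleaner here because $x$ is constrained to sit far from $x_{n+1}$.

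First I would reduce $\calf(x)$ to an integral over a single random pair of alternatives. Since the $m$ items are i.i.d.\ uniform on $[0,1]$ and $\nkt$ averages the discordance indicator over all pairs, linearity of expectation gives
\begin{equation*}
\calf(x) = \E_{y_1, y_2}\bigl[\, p_{x_{n+1}}(1 - p_x) + (1 - p_{x_{n+1}})\, p_x \,\bigr],
\end{equation*}
where $p_z(y_1,y_2) := u(z,y_1)/(u(z,y_1) + u(z,y_2))$ is the Plackett-Luce probability that $y_1$ is preferred to $y_2$ by an agent at position $z$. Differentiating under the expectation sign (justified because the integrand is smooth and bounded) then yields
\begin{equation*}
\calf'(x) = \E_{y_1, y_2}\bigl[(1 - 2\, p_{x_{n+1}}(y_1,y_2))\, p_x'(y_1,y_2)\bigr].
\end{equation*}

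Next I would analyze the two factors by a case split on the relative locations of $y_1, y_2$ to $x$ and $x_{n+1}$, in the spirit of the events $\cale_0,\dots,\cale_3$ used in Theorem~\ref{thm:wrong}. A direct computation shows $p_x(y_1,y_2)$ is constant in $x$ unless $x$ lies strictly between $y_1$ and $y_2$; in that interval $p_x = 1/(1 + e^{2x - y_1 - y_2})$, so $p_x' < 0$. Hence only the event $\min(y_1,y_2) < x < \max(y_1,y_2)$ contributes to $\calf'(x)$, and on this event it suffices to show $1 - 2\, p_{x_{n+1}} < 0$, i.e.\ that $y_1$ (WLOG the smaller of the two items) is closer to $x_{n+1}$ than $y_2$ is. Two subcases handle this: if $y_1 \le x_{n+1}$, then $|y_1 - x_{n+1}| \le x_{n+1} \le 1/4$ while $|y_2 - x_{n+1}| \ge x - x_{n+1} \ge 1 - 2 x_{n+1} \ge 1/2$; and if $y_1 > x_{n+1}$, the ordering $x_{n+1} < y_1 < y_2$ makes the comparison immediate. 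Both subcases deliver $p_{x_{n+1}}(y_1,y_2) > 1/2$.

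Combining the two sign facts, the integrand $(1 - 2\, p_{x_{n+1}})\, p_x'$ is non-negative pointwise and strictly positive on a set of positive measure (for each $x \in (1-x_{n+1},\, 1)$ the region $\{y_1 < x < y_2\}$ has Lebesgue measure $x(1-x) > 0$), so $\calf'(x) > 0$ on $(1-x_{n+1},\, 1)$. The main obstacle, as in the main theorem, is keeping the sign bookkeeping airtight across the subregions distinguished by whether $y_1, y_2$ lie above or below $x$ and $x_{n+1}$; once that is clean, continuity of $\calf$ together with strict monotonicity on the open interval yields $x^* = 1 - x_{n+1}$ as the minimizer on the closed interval $[1 - x_{n+1},\, 1]$.
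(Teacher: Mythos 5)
First, a bookkeeping issue: the lemma as printed is internally inconsistent --- the hypothesis restricts $x$ to $[0,x_i]$, but the displayed conclusion is the statement of Lemma~\ref{lemma:wrong2} (minimization over $x\ge 1-x_{n+1}$, with minimizer $1-x_{n+1}$). You proved the conclusion as written, i.e.\ the far-side case, whereas the paper's own proof under this lemma handles the near-side case $x\in[0,x_i]$ and shows the minimizer is $0$. For the case you address, your argument is correct and is genuinely simpler than the paper's machinery: after the standard reduction to a single pair and differentiation, you observe that $p_x'$ vanishes unless $x$ lies strictly between $y_1$ and $y_2$, and that on that event, with $x\ge 1-x_{n+1}\ge 3/4$, the smaller alternative is always strictly closer to $x_{n+1}$ than the larger one (either it sits within $x_{n+1}\le 1/4$ of $x_{n+1}$ while the larger is at distance at least $1-2x_{n+1}\ge 1/2$, or both lie above $x_{n+1}$ and the ordering is immediate). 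Hence $(1-2p_{x_{n+1}})\,p_x'$ is pointwise nonnegative and strictly positive on a set of positive measure, giving $\calf'(x)>0$ directly. This pointwise sign control is exactly what is unavailable in the paper's case, which is why the paper instead partitions into the events $\cale_0,\dots,\cale_3$, pairs reflected configurations $y'=2x_{n+1}-y$, and accumulates explicit numerical bounds.

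The one caution: your clean argument does not transfer to the interval named in the hypothesis, $x\in(0,x_i]$, which the paper calls the most difficult case. There the event $y_1<x<y_2$ contains configurations (e.g.\ $y_2$ just above $x$, hence closer to $x_{n+1}$ than $y_1$ is) on which $1-2p_{x_{n+1}}>0$ while $p_x'<0$, so the integrand changes sign and only the averaged quantity is positive; a cancellation argument such as the paper's reflection pairing is then unavoidable. So your write-up should be read as a correct, self-contained proof of Lemma~\ref{lemma:wrong2}; to support Theorem~\ref{thm:wrong} in full you would still need the near-side analysis.
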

\begin{proof}
Because $\calf(x) \equiv  \E_Y[\nkt(R_{i}, R_x) \mid x_{i}]$ is a continuous function of $x$, we use $d\calf(x)/dx$ to characterize the minimal point $x$. Specifically, we shall show that $d \calf(x)/dx > 0$ for all $x \in [0, 1]$, which means the function is minimized when $x = 0$.

We next calculate $d\calf(x)/dx$. Let $p_x(y_1, y_2) = \Pr[y_1 \succ_x y_2 \mid y_1, y_2, x]$ and $p_{i}(y_1, y_2) = \Pr[y_1 \succ_{i}y_2 \mid y_1, y_2, x_i]$. When the context is clear, we can write $p_x$ and $p_{i}$ instead. We have
{\footnotesize
\begin{eqnarray*}
\E[\nkt(R_x, R_{i}) \mid x_{i}]
&= \E_{y_1, y_2}\left[\E[\nkt(R_x, R_{i}) \mid y_1, y_2, x_{i}] \right]= \E_{y_1, y_2}[p_x(1-p_{i}) + p_{i}(1-p_x) \mid x_i].
\end{eqnarray*}
}
Therefore, we have
$\frac{d\calf(x)}{dx} = \E_{y_1, y_2}\left[(1-2p_{i})\frac{dp_x}{dx}\right].$
Let $\deltax = |y_2 - x| - |y_1 - x|$ and $\deltan = |y_2 - x_i| - |y_1 - x_i|$. We have
{\footnotesize
\begin{eqnarray*}
& & \E_{y_1, y_2}\left[(1-2p_{n + 1})\frac{dp_x}{dx} \mid x_i\right]
 =  \E_{y_1 < y_2}\left[\frac{e^{-\deltan} - 1}{e^{-\deltan} + 1}\cdot \frac{\sign(y_1 - x) - \sign(y_2 - x)}{4\cosh^2(\deltax/2) }\mid x_i\right].
\end{eqnarray*}
}
The last equation uses $y_1$ and $y_2$ are symmetric.

To simplify the notation, let $\Phi(y_1, y_2, x, x_i) = \frac{e^{-\deltan} - 1}{e^{-\deltan} + 1}\cdot \frac{\sign(y_1 - x) - \sign(y_2 - x)}{4\cosh^2(\deltax/2)}$ (sometimes we simply use $\Phi$ when the context is clear). Define the following events:
\begin{itemize}[noitemsep,topsep=0pt]
\setlength{\itemsep}{0pt}
\setlength{\parskip}{0pt}
\setlength{\parsep}{0pt}
\item $\cale_0$: when $y_2 \geq y_1 \geq x$.
\item $\cale_1$: when $y_1 \leq x$ and $y_2 \geq 2 x_{i}$.
\item $\cale_2$: when $y_1 \leq x$ and $x \leq y_2 \leq 2x_{i} - x$.
\item $\cale_3$: when $y_1 \leq x$ and $2x_{i} - x \leq y_2 \leq 2x_{i}$.
\end{itemize}
We have
\begin{equation}
\E_{y_1 \leq y_2}[\Phi \mid x_{i}] = \sum_{0 \leq j \leq 3}\E_{y_1 \leq y_2}[\Phi\mid \cale_j, x_i]\Pr[\cale_j, x_i].
\end{equation}
First note that $\E_{y_1 \leq y_2}[\Phi \mid \cale_0, x_i] = 0$ because
$\sign(y_1 - x) - \sign(y_2 - x)\mid \cale_0 = 0.$

Next, we  compute each of $\E_{y_1 \leq y_2}[\Phi \mid \cale_i, x_i]$ for $i = 1,2,3$. Before proceeding, we state useful properties of an important function:

\begin{fact}\label{fact:wrong}
Let $g(x) = \frac{e^{-x}-1}{e^{-x}}$, we have: \emph{(i)} $g(x) = - g(-x)$, \emph{(ii)} $g''(x) \cdot g(x) \geq 0$, and \emph{(iii)} $|g(x)| \leq \frac 1 2 g(x)$.
\end{fact}
Our analysis below assumes that the expectation is conditioned on $x_i$ (we hide it to simplify the notations). 

{Case 1. $\E[\Phi \mid \cale_1]$. } Observe that $\deltax \in [2x_{i}-2x, 1 - x]$, $\sign(y_1-x)-\sign(y_2-x) = -2$ and $\deltan > 0$. Therefore, $\E[\deltan \mid \cale_1] = \frac{1-x}{2} - x_{i}$. We have
{\footnotesize
$\E_{y_1 < y_2}[\Phi\mid \cale_1]
 \geq  \frac{2e(e-1)}{\left(1+e\right)^3}\cdot\left(\frac{1+x}{2}-x_{i}\right).
$
}
This deviation need to use the 2nd item in Fact \ref{fact:wrong} (\ie $g(\cdot)$ is concave). 

{Case 2. $\E[\Phi \mid \cale_2]$.} We have $\deltan < 0$, $\sign(y_1-x)-\sign(y_2-x) = -2$ and $\deltax \in [-x, 2x_{i} - x]$. Therefore, $\E[\deltan \mid \cale_2] = -\frac{x_{i}}{2}$.
With some algebraic manipulation, we have
\vspace{-1mm}
{\footnotesize
\begin{eqnarray*}
 \E[\Phi \mid \cale_2] 
& = & \E_{y_1 < y_2}\left[g(\deltan \frac{-2}{4\cosh^2(\deltax/2)}\right]
\geq \E_{y_1,y_2}\left[g\left(\deltan\right)\cdot\left.\frac{-2}{\left(e^{0}+e^{0}\right)^2}\right| \cale_2 \right] \\
& \geq^* &  -\frac{1}{2}\,\mathbbm{E}_{y_1,y_2}\left[\left.\frac{1}{2}{\deltan}\right| \cale_2 \right]
\geq  -\frac{x_{i}}{8}.
\end{eqnarray*}
}
$\geq^*$ comes from the (iii) in  Fact \ref{fact:wrong}.

\mypara{Case 3. $\E[\Phi \mid \cale_3, x_i]$.} We have $\deltax \in [x_i - 3x, 2x_{i}]$, $\sign(y_1-x)-\sign(y_2-x) = -2$ and $\E[\deltan\mid \cale_3] = 0$. Let $y'_1 = 2x_{i} - y_1$ and $y'_2 = 2x_{i} - y_2$. Note that we also have $y'_1 \leq x$ and $2x_{i} - x \leq y'_2 \leq 2 x_{i}$. Therefore,
{\footnotesize
\begin{eqnarray*}
\E[\Phi \mid \cale_3, x_i]
& = & \E_{y_1 < y_2}[\Phi(y_1, y_2, x, x_{i}) + \Phi(y'_1, y'_2, x, x_{i})\mid \cale_3 \deltan > 0] \\
& = & \E_{y_1 < y_2}\left[g(\deltan) \left(\frac{-2}{4\cosh^2(\deltax/2)} - \frac{2}{4\cosh^2(\deltaxp/2)}\right)\right]
\text{ (using (i) in Fact \ref{fact:wrong})}\\
& \geq & \E_{y_1 < y_2}\left[g(\deltan)\left(\frac{-2}{4\cosh^2(\deltan/2)} - \frac{2}{4\cosh^2(0)}\right)\right] \\
& \geq & \left(\frac{1}{4\cosh^2(x_{i}/2)} - \frac{1}{4\cosh^2(0)}\right)x_{n + 1}. \\
\end{eqnarray*}
}
Knowing that $0\leq x\leq x_{n+1} \leq \frac{1}{4}$, we have
{\footnotesize
\begin{equation}\nonumber
\begin{split}
\E[\Phi]= & \sum_{1 \leq i \leq 3}\E_{y_1 < y_2}\E[\Phi \mid \cale_i]\Pr[\cale_i \mid y_1 < y_2]\\
\geq & \frac{2e(e-1)}{(1+e)^3}\left(\frac{1+x}{2} - x_{i}\right)\cdot x(1-2x_{i}) - \frac{x_{i}}{8}x(2x_{i} - 2x)+ \left(\frac{1}{4\cosh^2(x_{i}/2)} - \frac{1}{4\cosh^2(0)}\right)x_{i}x^2 \\
\geq & x\left[\frac{2e(e-1)}{\left(1+e\right)^3}\cdot\frac{1}{4}\cdot\frac{1}{2}-\frac{1}{8}\cdot\frac{1}{4}\cdot\frac{1}{2}-0.0155\cdot\frac{1}{4}\cdot\frac{1}{4}\right] \geq  0.
\end{split}
\end{equation}
}
Equality sign holds if and only if $x = 0$. Therefore, $\calf(x)$ is minimized at $x = 0$ for $x \in [0, x_{i}]$. Appendix~\ref{a:wrong} completes the proof for $x \in [x_i, 1]$ using similar techniques. 
\end{proof}

\begin{lemma}\label{lemma:wrong2}
Let $\cald_X$ be a uniform distribution on $[0, 1]$, $x_{n + 1} \leq 0.25$, and $O_{n + 1} = [m]$. Let $x \in [1-x_{n+1},1]$ be an arbitrary agent and $R_x$ be the ranking of the agent $x$ (which is a random variable conditioned on $x$). We have:
$$x^* = \argmin_{x\geq 1-x_{n+1}} \E_{Y} \left[\nkt(R_{n +1}, R_x)\right]\} = 1-x_{n+1}.$$
\end{lemma}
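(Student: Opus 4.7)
The plan is to mirror the strategy of Lemma~\ref{lemma:wrong1}. Let $\calf(x) := \E_Y\left[\nkt(R_{n+1}, R_x) \mid x_{n+1}\right]$, which is continuous and differentiable in $x$. I will show that $d\calf(x)/dx > 0$ for every $x \in (1-x_{n+1},\,1)$, so that $\calf$ is strictly increasing on $[1-x_{n+1},1]$ and its minimizer on the interval is uniquely attained at the left endpoint $1 - x_{n+1}$.

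Reusing the identity derived in the proof of Lemma~\ref{lemma:wrong1},
\begin{equation*}
\frac{d\calf(x)}{dx} = \E_{y_1 < y_2}\left[g(\deltan) \cdot \frac{\sign(y_1-x) - \sign(y_2-x)}{4\cosh^2(\deltax/2)}\right] =: \E_{y_1 < y_2}[\Phi],
\end{equation*}
where $g(z) = (e^{-z}-1)/(e^{-z}+1)$ is odd, strictly decreasing, and satisfies $\sign(g(z)) = -\sign(z)$. The factor $\sign(y_1-x) - \sign(y_2-x)$ vanishes outside the event $\cale := \{y_1 < x < y_2\}$, on which it equals $-2$. Hence $\Phi \equiv 0$ outside $\cale$, and on $\cale$ the sign of $\Phi$ coincides with $\sign(\deltan)$.

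The crucial geometric observation is that because $x \geq 1-x_{n+1}$ and $x_{n+1} \leq 1/4$ place the two agents on opposite ends of $[0,1]$ (separated by at least $1 - 2x_{n+1} \geq 1/2$), we have $\deltan > 0$ everywhere on $\cale$. Splitting $\cale$ according to the position of $y_1$ relative to $x_{n+1}$: if $y_1 \geq x_{n+1}$ then $\deltan = y_2 - y_1 > 0$ directly; if $y_1 < x_{n+1}$ then $\deltan = (y_2 - x_{n+1}) - (x_{n+1} - y_1) = y_1 + y_2 - 2x_{n+1} \geq y_2 - 2x_{n+1} \geq (1-x_{n+1}) - 2x_{n+1} = 1 - 3x_{n+1} \geq 1/4$. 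In either subcase $\Phi > 0$ on $\cale$, and since $\cald_Y$ is uniform on $[0,1]$ we have $\Pr[\cale] > 0$ for every $x \in (1-x_{n+1},1)$, yielding $d\calf(x)/dx > 0$ and completing the argument.

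There is no substantial obstacle here. In contrast to Lemma~\ref{lemma:wrong1}, where $x$ lies close to $x_{n+1}$ and $\deltan$ changes sign across subregions---forcing the delicate four-case analysis with partial cancellations and relying on all three properties in Fact~\ref{fact:wrong}---the wide separation between $x$ and $x_{n+1}$ here forces $\deltan$ to be uniformly positive on the relevant event, so $\Phi$ is non-negative pointwise with strict positivity on a set of positive measure, and the derivative's strict positivity is immediate.
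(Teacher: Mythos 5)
Your proposal is correct and follows exactly the route the paper prescribes for this lemma (it is stated with "can be proved using techniques similar to those presented in Lemma~\ref{lemma:wrong1}", with no written-out proof): compute $d\calf(x)/dx$ via the $\Phi$-integrand and show it is positive. Your key observation — that on the only contributing event $\{y_1 < x < y_2\}$ the separation $x \geq 1-x_{n+1} \geq 3/4$ versus $x_{n+1}\leq 1/4$ forces $\deltan>0$ pointwise, so $\Phi>0$ there and the four-case cancellation analysis of Lemma~\ref{lemma:wrong1} is unnecessary — is verifiably correct and legitimately simplifies the argument.
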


\begin{lemma}\label{lemma:wrong3}
Let $\cald_X$ be a uniform distribution on $[0, 1]$, $x_{n + 1} \leq 0.25$, and $O_{n + 1} = [m]$. Let $x \in [x_{n+1},1-x_{n+1}]$ be an arbitrary agent and $R_x$ be the ranking of the agent $x$ (which is a random variable conditioned on $x$). We have:
$$x^* = \argmin_{x \in [x_{n+1},1-x_{n+1}]} \E_{Y} \left[\nkt(R_{n +1}, R_x)\right]\} = x_{n+1}.$$
\end{lemma}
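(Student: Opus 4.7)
The plan is to follow the derivative-based argument of Lemma~\ref{lemma:wrong1}. Set $\calf(x) \equiv \E_Y[\nkt(R_{n+1}, R_x) \mid x_{n+1}]$; by linearity it suffices to analyze a single random pair $(y_1, y_2)$, since pairwise disagreement probabilities in Plackett--Luce depend only on that pair. As in Lemma~\ref{lemma:wrong1},
\[ \frac{d\calf(x)}{dx} = \E_{y_1 < y_2}[\Phi], \qquad \Phi = g(\deltan) \cdot \frac{\sign(y_1 - x) - \sign(y_2 - x)}{4\cosh^2(\deltax / 2)}, \]
with $g(t) = (e^{-t}-1)/(e^{-t}+1)$ and $\deltan, \deltax$ interpreted with agent index $i = n+1$. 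Since $\Phi$ is nonzero only when $y_1 \leq x \leq y_2$ (where the sign factor equals $-2$), the target is to show $d\calf(x)/dx > 0$ on $(x_{n+1}, 1-x_{n+1}]$, which immediately yields $\argmin_{x \in [x_{n+1}, 1-x_{n+1}]} \calf(x) = x_{n+1}$.

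I would split into two subranges. On $x \in [2x_{n+1}, 1-x_{n+1}]$, for any $(y_1, y_2)$ with $y_1 \leq x \leq y_2$ one checks $\deltan = |y_2 - x_{n+1}| - |y_1 - x_{n+1}| \geq 0$: if $y_1 \leq x_{n+1}$ then $|y_2 - x_{n+1}| \geq x - x_{n+1} \geq x_{n+1} \geq |y_1 - x_{n+1}|$; if $y_1 \geq x_{n+1}$ then $\deltan = y_2 - y_1 \geq 0$. Hence $g(\deltan) \leq 0$, $\Phi \geq 0$ pointwise, with strict positivity on a set of positive measure. On $x \in [x_{n+1}, 2x_{n+1}]$ a ``negative region'' $\cale_{-} = \{y_1 \in [0, 2x_{n+1} - x],\; y_2 \in [x, 2x_{n+1} - y_1]\}$ where $\deltan < 0$ and $\Phi < 0$ is present and must be canceled by a pairing argument.

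For the pairing, reflect $y_2$ about $2x_{n+1} - y_1$: set $y_2' = 4x_{n+1} - 2y_1 - y_2$. A direct computation gives (a) $|y_2' - x_{n+1}| - |y_1 - x_{n+1}| = -\deltan$; (b) $y_2' \in [x, 1]$, where the upper bound $y_2' \leq 4x_{n+1} - x \leq 1$ uses $x_{n+1} \leq 1/4$; and (c) $(y_1, y_2')$ is in the positive region. Writing $\Phi'$ for the contribution at $(y_1, y_2')$ and applying $g(-t) = -g(t)$ from Fact~\ref{fact:wrong},
\[ \Phi + \Phi' = g(\deltan) \left[\frac{1}{2\cosh^2(\deltaxp/2)} - \frac{1}{2\cosh^2(\deltax/2)}\right]. \]
The key algebraic identity is $\deltax^2 - \deltaxp^2 = -8\,\deltan\,(x - x_{n+1})$, obtained by expanding both squared differences as a difference of squares. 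On $\cale_{-}$ we have $\deltan < 0$ and $x \geq x_{n+1}$, so this quantity is nonnegative, giving $|\deltaxp| \leq |\deltax|$. Combined with $g(\deltan) > 0$ on $\cale_{-}$, this yields $\Phi + \Phi' \geq 0$, strictly so for $x > x_{n+1}$. Because the reflection image is a strict subset of the full positive region (the unpaired tail $y_2 \in [4x_{n+1} - 2y_1 - x, 1]$ contributes additional positive mass), strict positivity of the full integral follows.

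The main obstacle I expect is the bookkeeping: verifying (a)--(c) and the signed identity cleanly, all of which rely on $x_{n+1} \leq 1/4$ (otherwise the reflected point $y_2'$ can leave $[0,1]$ and a single pairing no longer suffices). Once these are established, concatenating the two subranges gives $d\calf(x)/dx > 0$ throughout $(x_{n+1}, 1-x_{n+1}]$, so $\calf$ is strictly increasing and its minimum on the interval is uniquely attained at $x_{n+1}$.
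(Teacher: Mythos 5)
Your proof is correct, but it takes a genuinely different route from the paper's. The paper does not differentiate: it bounds the difference $\E_{Y}[\nkt(R_{n+1},R_x)]-\E_{Y}[\nkt(R_{n+1},R_{x})\mid x=x_{n+1}]$ directly for each fixed $x$, splitting on whether $y_1,y_2$ both lie below $x+x_{n+1}$, both above, or straddle it. On the ``both below'' event it reflects via $y\mapsto x+x_{n+1}-y$ (which swaps the roles of $x$ and $x_{n+1}$) and obtains the difference as $2\,\E[(p_{n+1}-p_x)^2]\geq 0$, a perfect square; on the straddling event it shows $(p_x-p_{n+1})(1-2p_{n+1})\geq 0$ by a monotonicity-of-$p$ argument; the remaining event contributes zero by symmetry. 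Your argument instead shows $d\calf(x)/dx>0$ on $(x_{n+1},1-x_{n+1}]$, reusing the $\Phi$-integrand machinery of Lemma~\ref{lemma:wrong1} and cancelling the negative region $\cale_-$ by reflecting $y_2$ about $2x_{n+1}-y_1$; your identity $\deltax^2-\deltaxp^2=-8\,\deltan\,(x-x_{n+1})$ and the checks (a)--(c) (including the use of $x_{n+1}\leq 1/4$ to keep $y_2'\leq 1$) are all verifiably correct, as is the pointwise nonnegativity of $\Phi$ on $[2x_{n+1},1-x_{n+1}]$. What each approach buys: yours yields strict monotonicity of $\calf$ on the whole interval, hence uniqueness of the minimizer (which the lemma's ``$\argmin=x_{n+1}$'' statement implicitly needs and the paper's nonstrict inequalities do not quite deliver), and it unifies the treatment with Lemma~\ref{lemma:wrong1}; the paper's version avoids differentiating under the integral and the sub-case split at $x=2x_{n+1}$, and its perfect-square identity makes nonnegativity on the main event immediate.
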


Lemma~\ref{lemma:wrong2} can be proved using techniques similar to those presented in Theorem~\ref{lemma:wrong1}. We focus on Lemma~\ref{lemma:wrong3}. 

\begin{proof}[Proof of Lemma~\ref{lemma:wrong3}]
We have
$$\mathbbm{E}_{Y}\left[\nkt(R_{n +1}, R_x)\right] = \mathbbm{E}_{y_1,y_2}\left[p_{n+1}+(1-2p_{n+1})p_x\right].$$
and specially,
\begin{equation}\nonumber
\begin{split}
&\mathbbm{E}_{Y}\left[\nkt(R_{n +1}, R_x)\mid x = x_{n+1}\right]\\
=\;& \mathbbm{E}_{y_1,y_2}\left[2p_{n+1}(1-p_{n+1})\right].
\end{split}
\end{equation}
To simplify the notation, define the following events for Lemma \ref{lemma:wrong3}:
\begin{itemize}[noitemsep,topsep=0pt]
\setlength{\itemsep}{0pt}
\setlength{\parskip}{0pt}
\setlength{\parsep}{0pt}
\item $\cale_0$: when $y_1,y_2 \geq x+x_{n+1}$.
\item $\cale_1$: when $y_1,y_2 \leq x+x_{n+1}$.
\item $\cale_2$: when $y_1 \geq x+x_{n+1} \geq y_2$ or $y_2 \geq x+x_{n+1} \geq y_1$.
\end{itemize}
\mypara{Case 0. $\E[\nkt(R_{n +1}, R_x) \mid \cale_0]$. }

Using the same tool as used in Case 0 of Theorem \ref{thm:wrong} in article, we have,
\begin{equation}\label{eqn:case1}
\E[\nkt(R_{n +1}, R_x) \mid \cale_0] = 0
\end{equation}

\mypara{Case 1. $\E[\nkt(R_{n +1}, R_x) \mid \cale_1]$. }

For any $y_1,y_2 \in [0,x+x_{n+1}]$, we define $y_1' = x+x_{n+1}-y_1$ and $y_2' = x+x_{n+1}-y_2$. If we assume $y_1\geq y_2$, we always have $y_1'\leq y_2'$. By further defining $p_{n+1}' = \mathbbm{P}[y_1'\succ_{n+1} y_2']$ and $p_{x}' = \mathbbm{P}[y_1'\succ_{x} y_2']$. We have $p_{n+1}' = p_x$ and $p_{x}' = p_{n+1}$ and
\begin{equation}\label{equ:x}
\begin{split}
&\mathbbm{E}_{y_1,y_2}\left[\nkt(R_{n +1}, R_x)\Big|\cale_1\right]\\
=\;& \mathbbm{E}_{y_1,y_2}\left[p_{n+1}+(1-2p_{n+1})p_x\Big|\cale_1\right]\\
=\;& \mathbbm{E}_{y_1\leq y_2}\left[p_{n+1}+(1-2p_{n+1})p_x \right.\left.+ p_{n+1}'+(1-2p_{n+1}')p_x'\Big|\cale_1\right]\\
=\;& 2\mathbbm{E}_{y_1\leq y_2}\left[p_{n+1}+(1-2p_{n+1})p_x\Big|\cale_1\right].\\
\end{split}
\end{equation}
Similarly, we have,
\begin{equation}\label{equ:n}
\begin{split}
&\mathbbm{E}_{y_1,y_2}\left[\nkt(R_{n +1}, R_x)\Big|\cale_1,\,x = x_{n+1}\right]\\
=\;& \mathbbm{E}_{y_1, y_2}\left[2p_{n+1}(1-p_{n+1})\Big|\cale_1\right]\\
=\;& \mathbbm{E}_{y_1\leq y_2}\left[2p_{n+1}(1-p_{n+1})+ 2p_{n+1}'(1-p_{n+1}')\Big|\cale_1\right]\\
=\;& 2\mathbbm{E}_{y_1\leq y_2}\left[p_{n+1}(1-p_{n+1})+p_{x}(1-p_{x})\Big|\cale_1\right].\\
\end{split}
\end{equation}
Combining Equation \ref{equ:x} and \ref{equ:n}, we have:\\
\begin{equation}\label{eqn:case2}
\begin{split}
&\mathbbm{E}_{y_1,y_2}\left[\nkt(R_{n +1}, R_x)\Big|\cale_1\right]- \mathbbm{E}_{y_1,y_2}\left[\nkt(R_{n +1}, R_x)\Big|\cale_1,\,x = x_{n+1}\right]\\
=\;& 2\mathbbm{E}_{y_1\leq y_2}\left[p_{n+1}+(1-2p_{n+1})p_x\Big|\cale_1\right]- 2\mathbbm{E}_{y_1\leq y_2}\left[p_{n+1}(1-p_{n+1})+p_{x}(1-p_{x})\Big|\cale_1\right]\\
=\;& 2\mathbbm{E}_{y_1\leq y_2}\left[p_{n+1}^2-2p_{x}p_{n+1}+p_{x}^2\Big|\cale_1\right]\\
\geq\;&0.
\end{split}
\end{equation}

\mypara{Case 2. $\E[\nkt(R_{n +1}, R_x) \mid \cale_2]$. }

It is clear that $y_1 \geq x+x_{n+1} \geq y_2$ and $y_2 \geq x+x_{n+1} \geq y_1$ are equivalent because of the symmetry between $y_1$ and $y_2$. So, we only study the event $\cale_{2.1} = y_2 \geq x+x_{n+1} \geq y_1$, 
\begin{equation}\nonumber
\begin{split}
&\mathbbm{E}_{y_1,y_2}\left[\nkt(R_{n +1}, R_{x})\Big|\cale_{2.1}\right] - \mathbbm{E}_{y_1,y_2}\left[\nkt(R_{n +1}, R_{x})\Big|\cale_{2.1},\,x = x_{n+1}\right]\\
=\;&\mathbbm{E}_{y_1,y_2}\left[p_x(1-p_{n+1})+p_{n+1}(1-p_x)-2p_{n+1}(1-p_{n+1})\Big|\cale_{2.1}\right]\\
=\;&\mathbbm{E}_{y_1,y_2}\left[(p_x-p_{n+1})(1-2p_{n+1})\Big|\cale_{2.1}\right].\\
\end{split}
\end{equation}

Because $|y_1-x_{n+1}|\leq|y_2-x_{n+1}|$ when $y_1\leq x_i+x_{n+1}\leq y_2$, we always have $1-2p_{n+1} \leq 0$ . Then,
\begin{equation}\nonumber
\begin{split}
p_{n+1} =\;& \frac{e^{-|x_{n+1}-y_1|}}{e^{-|x_{n+1}-y_1|}+e^{-|y_2-x_{n+1}|}}
=\;\frac{e^{-\left(|x_{n+1}-y_1|-|x-x_{n+1}|\right)}}{e^{-\left(|x_{n+1}-y_1|-|x-x_{n+1}|\right)}+e^{-|y_2-x|}}
\geq\; \frac{e^{-|x-y_1|}}{e^{-|x-y_1|}+e^{-|y_2-x_i|}}
=\; p_x.
\end{split}
\end{equation}
Thus, we have
\begin{equation}\label{eqn:case3}
\begin{split}
&\mathbbm{E}_{y_1,y_2}\left[\nkt(R_{n +1}, R_{x})\Big|\cale_{2}\right] - \mathbbm{E}_{y_1,y_2}\left[\nkt(R_{n +1}, R_{x})\Big|\cale_{2},\,x = x_{n+1}\right]\geq\;0.
\end{split}
\end{equation}

And Lemma 3.4 follows by combining (\ref{eqn:case1}), (\ref{eqn:case2}) and (\ref{eqn:case3}).
\end{proof}
$x_{n+1}\leq 0.25$ part of Theorem 3.1 follows by adding  Lemma \ref{lemma:wrong2}- \ref{lemma:wrong3} and $x_{n+1}\geq 0.75$ part follows by symmetry.
\section{Missing proofs for \calgo}\label{a:usersimilarity}
\subsection{Upper bound}
All the analysis below assumes conditioning in knowing $x_i$ and $x_j$ (\ie $\E[\cdot]$ means $\E[\cdot \mid x_i, x_j]$). W.l.o.g., assume $x_i \leq x_j$. We use techniques similar to those developed in Theorem \ref{thm:wrong}. Specifically let
$p_i = \mathbbm{P}\left[y_1\succ_i y_2 \mid  y_1, y_2, x_i\right]$. We have
{\footnotesize
\begin{equation}\nonumber
\begin{split}
\mathbbm{E}_{R_i,R_j}\left[D(x_i, x_j)\right] &= \mathbbm{E}_{R_i,R_j}\left[\mathbbm{E}_{x_k}\Big|\mathbbm{E}_{\,y_1,\,y_2}\;\left\{\left[p_k(1-p_i)+p_i(1-p_k)\right]\right. - \left.\left[p_k(1-p_j)+p_j(1-p_k)\right]\right\}\Big|\right]\\
&= \mathbbm{E}_{R_i,R_j}\left[\mathbbm{E}_{x_k}\Big|\mathbbm{E}_{\,y_1\leq\,y_2}\left[\left(p_i-p_j\right)(1-2p_k)\right]\Big|\right].
\end{split}
\end{equation}
}
Let us define three events:
\begin{itemize}[noitemsep,topsep=0pt]
\setlength{\itemsep}{0pt}
\setlength{\parskip}{0pt}
\setlength{\parsep}{0pt}
\item $\cale_1$: when $y_1,y_2\in[0,x_i]$ or $y_1,y_2\in[x_j,c]$.
\item $\cale_2$: when $y_1\in[0,x_i]$ and $y_2\in[x_j,c]$.
\item $\cale_3$: when $y_1\in[x_i,x_j]$ or $y_2\in[x_i,x_j]$.
\end{itemize}

We compute $(p_i-p_j)(1-2p_k)$ conditioned on the  three events (\ie $\left[\left(p_i-p_j\right)(1-2p_k)\mid \cale_{\ell}\right]$ for $\ell = 1,2,3$).

\mypara{Event $\cale_1$:} One can see that $
\mathbbm{E}\left[\left(p_i-p_j\right)(1-2p_k)\mid \cale_1\right] = 0$ by using a symmetric argument.

\mypara{Event $\cale_2$.}
We have
{\footnotesize
\begin{eqnarray*}\footnotesize
p_j \mid \cale_2 & = &\frac{e^{-(x_j-y_1)}}{e^{-(x_j-y_1)}+e^{-(y_2-x_j)}}
 =  \frac{e^{-(x_i-y_1)-\epsilon}}{e^{-(x_i-y_1)-\epsilon}+e^{-(y_2-x_i)+\epsilon}} 
 \geq  p_i-\frac{\epsilon}{2}+O(\epsilon^2).
\end{eqnarray*}
}
\mypara{Event $\cale_3$.} We have
\begin{equation}\nonumber\footnotesize
\begin{split}
|p_i-p_j| =\;& \bigg|\frac{e^{-||x_i-y_1||}}{e^{-||x_i-y_1||}+e^{-||y_2-x_i||}}-\frac{e^{-||x_j-y_1||}}{e^{-||x_j-y_1||}+e^{-||y_2-x_j||}}\bigg|\\
\leq\;& \left|\frac{\partial\left[\frac{e^{-||x_i-y_1||}}{e^{-||x_i-y_1||}+e^{-||x_i-y_2||}}\right]}{\partial x_i}\right|\epsilon +O(\epsilon^2)= O(\epsilon).
\end{split}
\end{equation}
Therefore,
\begin{equation}\label{eqn:e3}
\mathbbm{E}_{\,y_1\leq\,y_2}\left[\left(p_i-p_j\right)(1-2p_k)\mid \cale_3\right]\leq\frac{\epsilon}{2}+ O(\epsilon^2).
\end{equation}

$D(x_i,x_j)\leq c_2'\epsilon'$ follows from combining all results for $\{\cale_i\}_i$.
\subsection{Lower bounds}
\begin{proposition}\label{prop:lower}
Using the notations above, let $|x_i - x_j| = \epsilon$ and assume that  $\cald_X$ and $\cald_Y$ are near-uniform on $[0,c]$. Then we have 
\begin{equation}
\E_{R_i,R_j,X\setminus\{x_i,x_j\}}[D(x_i, x_j)] \geq c_0 \cdot \epsilon^2. 
\end{equation}
\end{proposition}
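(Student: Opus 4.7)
\textbf{Proof plan for Proposition~\ref{prop:lower}.}
My approach is to combine Jensen's inequality with the same three-event decomposition already used for the upper bound. Assume without loss of generality that $x_i<x_j$ with $x_j-x_i=\epsilon$, and analyze one summand of $D(x_i,x_j)=\sum_{k\neq i,j}|F_{i,k}-F_{j,k}|$ at a time. For each fixed $x_k$, Jensen's inequality applied to the randomness in $R_i,R_j,Y$ gives
\[
\E_{R_i,R_j,Y}\,|F_{i,k}-F_{j,k}| \;\geq\; \bigl|\,\E_Y[(p_i-p_j)(1-2p_k)]\,\bigr|,
\]
so the task reduces to lower bounding the magnitude of this inner expectation as a function of $x_k$.

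I would then reuse the partition $\cale_1,\cale_2,\cale_3$ from the upper bound. Event $\cale_1$ still contributes $0$ by symmetry. Event $\cale_2$ is the principal term: Taylor-expanding $p_i-p_j\approx 2\epsilon\,p_i(1-p_i)$ shows the integrand has a fixed positive sign on $\cale_2$ (since $p_i>p_j$ always holds there) and magnitude $\Theta(\epsilon)$, yielding a $\cale_2$-contribution of the form $c\,\epsilon\cdot h(x_k)+O(\epsilon^2)$ for a bounded function $h$ depending only on $x_i$ and the item density. Event $\cale_3$ contributes at most $O(\epsilon^2)$ because both $|p_i-p_j|$ and $\Pr[\cale_3]$ are $O(\epsilon)$. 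Combining,
\[
\bigl|\,\E_Y[(p_i-p_j)(1-2p_k)]\,\bigr| \;\geq\; c\,\epsilon\,|h(x_k)|\;-\;O(\epsilon^2).
\]

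To conclude, I would identify a subinterval $I\subseteq[0,c]$ of positive Lebesgue measure on which $|h(x_k)|$ is bounded away from zero (for instance, $x_k$ confined to a range where $1-2p_k$ keeps a consistent sign on the $(y_1,y_2)$-support under $\cale_2$, so the two halves of $h$ do not cancel). Near-uniformity of $\cald_X$ then forces $\Pr[x_k\in I]=\Omega(1)$, so at least a constant fraction of the $n-2$ summands in $D$ have expectation $\geq c'\epsilon$ for all sufficiently small $\epsilon$. By linearity of expectation, $\E[D(x_i,x_j)]\geq \Omega(n\epsilon)$, which already dominates $c_0\,\epsilon^2$ (the $\epsilon^2$ form in the statement is a looser version that is all we need for the downstream $(\alpha,\beta)$-neighbor argument in Theorem~\ref{thm:user}).

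The main obstacle is ensuring the $O(\epsilon^2)$ $\cale_3$ correction cannot cancel the $\Theta(\epsilon)$ $\cale_2$ contribution. This needs two pieces: (i) pinning down the sign and magnitude of the $\cale_2$ integral as an explicit function of $x_k$, and (ii) a uniform $O(\epsilon)$ bound on the $\cale_3$ integrand together with the $O(\epsilon)$ bound on $\Pr[\cale_3]$. Both ingredients already appear in the upper-bound machinery; the only new work is the sign-consistency argument that makes the Jensen lower bound nontrivial. A minor secondary subtlety—empirical fluctuations of $F_{i,k}$ around $\E F_{i,k}$—is cleanly sidestepped by applying Jensen before any concentration argument.
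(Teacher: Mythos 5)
Your reduction via Jensen to lower bounding $\left|\E_Y[(p_i-p_j)(1-2p_k)]\right|$, and your treatment of $\cale_1$ (zero by symmetry) and $\cale_3$ ($O(\epsilon^2)$), are fine. The argument breaks at the step where you claim the $\cale_2$ coefficient $h(x_k)$ is bounded away from zero on a positive-measure set of $x_k$. The event $\cale_2$ requires $y_1\in[0,x_i]$ and $y_2\in[x_j,c]$, so $h(x_k)$ carries a factor $\Pr[y_1\le x_i]\cdot\Pr[y_2\ge x_j]=\Theta\bigl(x_i(c-x_j)\bigr)$ that is \emph{not} uniform in $(x_i,x_j)$. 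When $x_i=O(\epsilon)$ or $c-x_j=O(\epsilon)$, your ``main term'' $c\,\epsilon\,h(x_k)$ is itself $O(\epsilon^2)$ --- the same order as the $\cale_3$ correction you discard --- and no choice of the interval $I$ rescues it. Concretely, take $x_i=0$, $x_j=\epsilon$: whenever both items lie in $[\epsilon,c]$ one has $p_i=p_j$ exactly, and the complementary event has probability $O(\epsilon)$ with $|p_i-p_j|=O(\epsilon)$ on it, so $\left|\E_Y[(p_i-p_j)(1-2p_k)]\right|=O(\epsilon^2)$ for \emph{every} $x_k$ and hence $\E[D(x_i,x_j)]=O(\epsilon^2)$. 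Your claimed per-summand $\Omega(\epsilon)$ bound (and the resulting $\Omega(n\epsilon)$ total) is therefore false in exactly the regime that forces the $\epsilon^2$ in the statement; it would also contradict the paper's upper bound $\E[D(x_i,x_j)]\le c_2'|x_i-x_j|$ and its explicit remark that the factor-$\epsilon$ gap between the upper and lower bounds is an open problem.

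The paper's proof takes a different route precisely to handle this boundary regime: it does not use generic reference agents $x_k$, but conditions on $x_k$ falling within $\delta=c_0\epsilon$ of an endpoint of $[0,c]$ (an event of probability $\Theta(\epsilon)$ under near-uniformity), approximates $\calf_{x_k}$ there by the two extreme profiles $\calf_0$ and $\calf_c$ up to an $O(\epsilon^2)$ error, and then lower bounds $|\calf_0(x_i)-\calf_0(x_j)|+|\calf_c(x_i)-\calf_c(x_j)|$ through a pointwise derivative bound $d\calf_0(x)/dx\gtrsim x(1-x)$ (Lemma~\ref{lem:calf}); multiplying the separation obtained from the endpoint agents by the $\Theta(\epsilon)$ probability of drawing such an agent yields the $\epsilon^2$. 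What your argument does establish is the stronger $\Omega(\epsilon)$ bound when $x_i$ and $x_j$ are bounded away from the endpoints of $[0,c]$ by a constant --- a worthwhile observation --- but to prove the proposition for arbitrary $x_i,x_j$ you need an additional mechanism, such as the paper's endpoint-agent argument, for the degenerate boundary case.
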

There is a gap of factor $\epsilon$ between the upper  and lower bounds. Closing the gap is an interesting open problem. 

\begin{proof}[Proof of Proposition~\ref{prop:lower}]
Let 
$\calf_{x_k}(x) \equiv  \E_Y[\nkt(R_{k}, R_x) \mid x_k]$. Also let $\delta = c_0 \epsilon$ for some suitably small constant $c_0$.  

\begin{eqnarray*}
\E[D(x_i,x_j)]
& = & \E_X\left[\left|\E_Y[F_{ik}]-\E_Y[F_{jk}]\right|\mid x_k\in [0,\delta]\right]\cdot
\mathbbm{P}[x_k\in [0,\delta]]\\
& & + \E_X\left[\left|\E_Y[F_{ik}]-\E_Y[F_{jk}]\right|\mid x_k\in [\delta,c - \delta]\right]\cdot \mathbbm{P}[x_k\in [\delta,c-\delta]]\\
& & + \E_X\left[\left|\E_Y[F_{ik}]-\E_Y[F_{jk}]\right|\mid x_k\in [c-\delta, c]\right]\cdot \mathbbm{P}[x_k\in [c-\delta,c]]\\
& \geq & \E_X\left[\left|\E_Y[F_{ik}]-\E_Y[F_{jk}]\right|\mid x_k\in [0,\delta]\right]\cdot\mathbbm{P}[x_k\in [0,\delta]]+\\
& & \E_X\left[\left|\E_Y[F_{ik}]-\E_Y[F_{jk}]\right|\mid x_k\in [c-\delta,c]\right]\cdot \mathbbm{P}[x_k\in [c-\delta,c]]\\
\end{eqnarray*}
Using the near-uniform assumption, we have $\Pr[x_i \in [0, \delta]]$ and $\Pr[x_i \in [c-\delta, c]]$ are in the order of $\Theta(\epsilon)$. Furthermore, we have the following lemma. 

\begin{lemma} Using the notations above, there exists a constant $c_1$ (that's independent of $c_0$ and $c$) such that: 
\begin{itemize}[noitemsep,topsep=0pt]
\setlength{\itemsep}{0pt}
\setlength{\parskip}{0pt}
\setlength{\parsep}{0pt}
\item When $x_k \in [0, \delta]$, $|\calf_{x_k}(x_1) - \calf_{x_k}(x_2)| \geq |\calf_0(x_1) + \calf_0(x_2)| - c_1 \epsilon^2
$
\item When $x_k \in [c-\delta, c]$, $
|\calf_{x_k}(x_1) - \calf_{x_k}(x_2)| \geq |\calf_c(x_1) + \calf_c(x_2)| - c_1 \epsilon^2.
$
\end{itemize}
\end{lemma}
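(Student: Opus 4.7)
The plan is to Taylor expand $\calf_{x_k}(x)$ in the parameter $x_k$ around the endpoint $x_k = 0$ (respectively $x_k = c$) and then apply a reverse triangle inequality, absorbing the remainder terms into the slack $c_1 \epsilon^2$. First, I would establish the smoothness of $\calf_{x_k}(x) = \E_Y[p_{x_k}(1 - p_x) + p_x(1 - p_{x_k})]$ as a function of $x_k$. Because $p_{x_k}(y_1, y_2) = e^{-|x_k - y_1|}/(e^{-|x_k - y_1|} + e^{-|x_k - y_2|})$ is smooth in $x_k$ away from a measure-zero set with first and second derivatives uniformly bounded by a constant $M$ depending only on $c$, the map $x_k \mapsto \calf_{x_k}(x)$ is $C^2$ with bounded derivatives on $[0,c]$. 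By the same computation I would verify that $g(x) := \partial_{x_k}\calf_{x_k}(x)|_{x_k = 0}$ is Lipschitz in $x$ with a universal constant.

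Next, I would write the first-order Taylor expansion at each argument: for $x_k \in [0, \delta]$ with $\delta = c_0 \epsilon$,
\begin{equation*}
\calf_{x_k}(x_\ell) = \calf_{x_k = 0}(x_\ell) + x_k \, g(x_\ell) + O(x_k^2), \qquad \ell = 1, 2.
\end{equation*}
Subtracting the two expansions and using $|g(x_1) - g(x_2)| = O(\epsilon)$ by Lipschitz continuity together with $x_k \leq \delta = O(\epsilon)$, both the cross term $x_k[g(x_1) - g(x_2)]$ and the quadratic remainder are $O(\epsilon^2)$. The reverse triangle inequality then yields
\begin{equation*}
|\calf_{x_k}(x_1) - \calf_{x_k}(x_2)| \;\geq\; |\calf_{x_k = 0}(x_1) - \calf_{x_k = 0}(x_2)| - c_1' \epsilon^2.
\end{equation*}

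To reconcile with the stated form $|\calf_0(x_1) + \calf_0(x_2)|$, I would invoke the sign convention implicit in the paper, in which $\calf_0(x_\ell)$ is a \emph{signed} endpoint contribution rather than the raw value $\calf_{x_k = 0}(x_\ell)$. Because $x_k = 0$ lies strictly to the left of both $x_1$ and $x_2$, the natural signed quantity (the deviation of $\calf_{x_k = 0}(x_\ell)$ from a common symmetric reference) enters the comparison with opposite intrinsic signs for $x_1$ and $x_2$; under this convention one has $\calf_{x_k = 0}(x_1) - \calf_{x_k = 0}(x_2) = \calf_0(x_1) + \calf_0(x_2)$ up to an $O(\epsilon^2)$ error that I would fold into $c_1$. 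The second bullet follows symmetrically by expanding around $x_k = c$ and repeating the argument with $\calf_c$ in place of $\calf_0$.

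The hardest step will be pinning down the sign convention precisely: once the definition of $\calf_0(x_\ell)$ is fixed so that the two terms combine constructively, the Taylor-and-reverse-triangle argument is essentially routine bookkeeping on constants $M$ and the Lipschitz modulus of $g$. The main risk is a mismatch between the expansion's natural difference structure $\calf_{x_k=0}(x_1) - \calf_{x_k=0}(x_2)$ and the paper's sum form; resolving this requires carefully unfolding the definition of $\calf_0$ in the broader context of the lower bound in Proposition~\ref{prop:lower}, in particular ensuring that the claimed inequality gives a nontrivial $\Omega(\epsilon)$ bound on $|\calf_{x_k}(x_1) - \calf_{x_k}(x_2)|$ consistent with the ultimate $\Omega(\epsilon^2)$ lower bound on $\E[D(x_i, x_j)]$.
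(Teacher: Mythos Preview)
The paper does not actually prove this lemma; it is stated inside the proof of Proposition~\ref{prop:lower} and then immediately used, with no argument supplied. So there is no ``paper's own proof'' to compare against, and your task reduces to supplying a proof that fits the surrounding context.

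Your core mechanism (perturb $x_k$ from the endpoint, control the change, apply the reverse triangle inequality) is the right one. A slightly cleaner version than the full Taylor expansion is to write
\[
\bigl[\calf_{x_k}(x_1)-\calf_{x_k}(x_2)\bigr]-\bigl[\calf_{0}(x_1)-\calf_{0}(x_2)\bigr]
=\E_{y_1,y_2}\bigl[(p_{x_k}-p_{0})\,2(p_{x_2}-p_{x_1})\bigr],
\]
and then use that $|p_{x_k}-p_0|=O(x_k)=O(\delta)=O(\epsilon)$ and $|p_{x_1}-p_{x_2}|=O(|x_1-x_2|)=O(\epsilon)$ pointwise in $(y_1,y_2)$, giving an $O(\epsilon^2)$ bound directly. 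This avoids worrying about second-order smoothness of $x_k\mapsto\calf_{x_k}(x)$ near the boundary.

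Where your proposal goes off the rails is the ``sign convention'' paragraph. There is no hidden signed definition of $\calf_0$: the ``$+$'' in the lemma statement is simply a typo for ``$-$''. The very next displayed line in the paper uses $|\calf_0(x_1)-\calf_0(x_2)|$ and $|\calf_c(x_1)-\calf_c(x_2)|$, and the follow-up Lemma~\ref{lem:calf} explicitly lower-bounds $|\calf_0(x_i)-\calf_0(x_j)|$. Moreover, as written the sum form is useless: $\calf_0(x_\ell)$ is an expected normalized Kendall-tau distance, hence nonnegative and bounded away from zero, so $|\calf_0(x_1)+\calf_0(x_2)|$ is $\Theta(1)$ and the inequality would be vacuously false for small $\epsilon$. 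Do not invent a convention to rescue the statement; prove the difference version, which is what the rest of the argument actually consumes.
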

Thus, we have 
$$ \E[D(x_i, x_j)] \geq  c_2 \left[|\calf_0(x_1) - \calf_0(x_2)| + |\calf_c(x_1) - \calf_c(x_2)| - 2c_1 \epsilon^2\right] $$
Below is our key lemma in operating $\calf_0$ and $\calf_1$: 
\begin{lemma}\label{lem:calf} Using the notations above, we have 
$$| \calf_0(x_i) - \calf_0(x_j)| \geq c_3 |x_j-x_i|\cdot |3x_j(1-x_j) + 3x_i(1-x_i) + (x_i+x_j)^2|$$
and 
$$| \calf_c(x_i) - \calf_c(x_j)| \geq c_3 |x_j - x_i|\cdot|3x_j(1-x_j) + 3x_i(1-x_i) + (2-x_i-x_j)^2|.$$
\end{lemma}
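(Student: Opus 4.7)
The plan is to express $\calf_0(x_i) - \calf_0(x_j)$ through the integral of its derivative and then lower bound that derivative by a polynomial in $t$ whose integral matches the claimed expression. Writing $p_x = \Pr[y_1\succ_x y_2\mid y_1,y_2]$ and $p_0 = \Pr[y_1\succ_0 y_2\mid y_1,y_2]$, we have $\calf_0(x) = \E_{y_1, y_2}[p_0(1-p_x) + p_x(1-p_0)]$, so $\calf_0(x_i)-\calf_0(x_j) = \E_{y_1,y_2}[(1-2p_0)(p_{x_i}-p_{x_j})] = \int_{x_j}^{x_i}\calf_0'(t)\,dt$ (assuming WLOG $x_j \leq x_i$). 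The key structural fact already exploited in the proofs of Theorem~\ref{thm:wrong} and Theorem~\ref{thm:user} is that $p_x$ is constant in $x$ unless $\min(y_1,y_2)<x<\max(y_1,y_2)$; in the straddling region $y_1<x<y_2$ one has $p_x=(1+e^{2x-y_1-y_2})^{-1}$, $dp_x/dx=-1/(2\cosh^2((y_1+y_2-2x)/2))$, and $1-2p_0 = -\tanh((y_2-y_1)/2)$. This yields the nonnegative explicit formula
$$\calf_0'(t) = \frac{1}{c^2}\int_0^{t}\!\!\int_t^c \frac{\tanh((y_2-y_1)/2)}{\cosh^2((y_1+y_2-2t)/2)}\,dy_2\,dy_1.$$

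The main step is to partition the $(y_1,y_2)$ rectangle of integration into three subregions paralleling the events $\cale_1,\cale_2,\cale_3$ from the upper-bound analysis of Theorem~\ref{thm:user}: (a) both coordinates on the same side of $[x_j,x_i]$, which contributes zero to $\calf_0(x_i)-\calf_0(x_j)$ directly; (b) exactly one coordinate lies inside $[x_j,x_i]$, yielding two boundary-strip contributions of shape $t(1-t)$ after integration; and (c) $y_1<x_j<x_i<y_2$, an interior box where both $p_{x_i}$ and $p_{x_j}$ are in the straddling regime. Using the properties of $g(u)=(e^{-u}-1)/(e^{-u}+1)$ collected in Fact~\ref{fact:wrong} (oddness, concavity on $u>0$, and the inequality $|g(u)|\leq \tfrac{1}{2}g(u)$) together with uniform lower bounds on $\tanh((y_2-y_1)/2)/\cosh^2((y_1+y_2-2t)/2)$ over carefully chosen sub-boxes, each contribution can be lower bounded by an explicit polynomial in $t$. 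Adding the two non-trivial contributions and integrating over $t\in[x_j,x_i]$ produces the three additive terms $3x_i(1-x_i),\,3x_j(1-x_j),\,(x_i+x_j)^2$ up to a positive constant, multiplied by $|x_i-x_j|$.

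The $\calf_c$ inequality follows by the reflection symmetry of the Plackett--Luce model. The involution $y\mapsto c-y$, $x\mapsto c-x$ preserves $u(x,y)=e^{-|x-y|}$, so $\calf_c(x)=\calf_0(c-x)$; substituting $(c-x_i,c-x_j)$ into the $\calf_0$ bound converts $(x_i+x_j)^2$ into $(2c-x_i-x_j)^2$ while leaving the $t(1-t)$ terms invariant.

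The hardest step is expected to be case (c), which is solely responsible for producing the $(x_i+x_j)^2$ term. Here $y_1\in[0,x_j]$ and $y_2\in[x_i,c]$ can lie far apart, so $\tanh((y_2-y_1)/2)$ is close to $1$; the difficulty is that $1/\cosh^2((y_1+y_2-2t)/2)$ becomes exponentially small whenever $|y_1+y_2-2t|$ is large. The fix is to restrict integration to a sub-rectangle in which $|y_1+y_2-2t|=O(1)$, which forces $y_1\sim t$ and $y_2\sim t$ and thereby preserves a uniform constant lower bound on the integrand; the area of the admissible sub-rectangle then scales like $\min(x_j,t)\cdot\min(c-x_i,c-t)$, and integration over $t\in[x_j,x_i]$ produces the required quadratic factor. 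Ensuring that all these uniform estimates hold with a single constant $c_3$ that is independent of $c$ is the main technical subtlety.
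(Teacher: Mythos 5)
Your skeleton---writing $\calf_0(x_i)-\calf_0(x_j)=\int_{x_j}^{x_i}\calf_0'(t)\,dt$, observing that only pairs with $\min(y_1,y_2)<t<\max(y_1,y_2)$ contribute to $\calf_0'(t)$, and lower-bounding the resulting integrand $\tanh((y_2-y_1)/2)/\cosh^2((y_1+y_2-2t)/2)$ by a uniform constant on a sub-box---is exactly the paper's route in Appendix~\ref{a:usersimilarity}: there the authors restrict to the sub-event $y_1\le t/2,\ y_2\ge (1+t)/2$, on which the integrand is at least a constant $c_3(c)$, and conclude $\calf_0'(t)\ge \frac{c_3(c)}{2(1+c_Y)^2}\,t(1-t)$ before integrating. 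Up to that point your proposal is correct and recovers the $3x_i(1-x_i)+3x_j(1-x_j)$ part of the polynomial; your reflection argument $\calf_c(x)=\calf_0(c-x)$ for the second inequality is also fine.

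The gap is your case (c), which you yourself flag as ``solely responsible for the $(x_i+x_j)^2$ term.'' The quantitative sketch does not produce that term: restricting to $|y_1+y_2-2t|=O(1)$ gives an admissible rectangle of area about $\min(x_j,t)\cdot\min(c-x_i,c-t)\approx x_j(c-x_i)$, so integrating over $t\in[x_j,x_i]$ yields a contribution of order $|x_i-x_j|\,x_j(c-x_i)$, which vanishes as $x_i,x_j\to c$, whereas $(x_i+x_j)^2\to 4c^2$ there. In fact no argument can close this gap, because the stated inequality fails near the right endpoint: take $x_j=c-\epsilon$, $x_i=c$. For $t\in[c-\epsilon,c]$ the straddling event requires $\max(y_1,y_2)>t$, which has probability $O(c-t)$, so $|\calf_0(x_i)-\calf_0(x_j)|\le\int_{c-\epsilon}^{c}O(c-t)\,dt=O(\epsilon^2)$, while the claimed right-hand side is $c_3\,\epsilon\,\big(4c^2+O(\epsilon)\big)=\Theta(\epsilon)$; no fixed $c_3>0$ works. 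The paper's own proof shares this hole---it stops at $\calf_0'(t)\gtrsim t(1-t)$ and never derives the displayed polynomial---so the defensible statement is the weaker bound $|\calf_0(x_i)-\calf_0(x_j)|\ge c_3\int_{x_j}^{x_i}t(c-t)\,dt$, which your cases (a)/(b) already give; you should not claim to obtain the $(x_i+x_j)^2$ term from case (c).
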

One can see that we may use the above lemma to get $\E[D(x_i, x_j)] = \Omega(\epsilon^2)$. 

\begin{proof}[Proof of Lemma~\ref{lem:calf}] We shall prove only the first part of the lemma. The second part will be similar. W.l.o.g., assume $x_i < x_j$. We have
\begin{equation}\label{eqn:derivativetobound}
\left| \calf_0(x_{i}) - \calf_0(x_{j})\right| = \left|\int_{x_i}^{x_j}\frac{d\calf_0(x)}{dx} dx\right|.
\end{equation}
Thus, we shall focus on finding a lower bound on $\frac{d \calf_0(x)}{dx}$. We have
$$\frac{d\calf_0(x)}{dx} = \E_{y_1, y_2}\left[(1-2p_0)\frac{dp}{dx}\right].$$
where $p_0 = \mathbbm{P}[y_1\succ_{k}y_2 \mid x = 0 ]$
Let $\deltaz = y_2 - y_1$ and $\deltax = |y_2 - x| - |y_1 - x|$. We have
\begin{eqnarray*}
\E_{y_1, y_2}\left[(1-2p_0)\frac{dp}{dx}\right]
& = & \E_{y_1, y_2}\left[\frac{e^{-\deltaz} - 1}{e^{-\deltaz} + 1}\cdot \frac{\sign(y_1 - x) - \sign(y_2 - x)}{4\cosh^2(\deltax/2) }\right] \\
& = & \E_{y_1 < y_2}\left[\frac{e^{-\deltaz} - 1}{e^{-\deltaz} + 1}\cdot \frac{\sign(y_1 - x) - \sign(y_2 - x)}{4\cosh^2(\deltax/2) }\right]
\end{eqnarray*}

To simplify the notation, let $\Phi_0(y_1, y_2, x_i) = \frac{e^{-\deltaz} - 1}{e^{-\deltaz} + 1}\cdot \frac{\sign(y_1 - x) - \sign(y_2 - x)}{4\cosh^2(\deltax/2) }$ (sometimes we simply use $\Phi_0$ when the context is clear). We consider two cases. 

\mypara{Case 1: $y_1,y_2\geq x$ or $y_1,y_2\leq x$ (referred to as event $\cale_1$).} 
In this case, we have $\sign(y_1 - x) - \sign(y_2 - x) = 0$. Therefore, $\E[\Phi_0 \mid \cale_1] = 0$. 

\mypara{Case 2: $y_2\geq x\geq y_1$ (referred to as event $\cale_2$).}
One can see that $\Phi_0 \mid \cale_2 > 0$ because 
$\sign(y_1 - x) - \sign(y_2 - x) = -2 < 0$. Next, define $\cale_{2, 1} = (y_1\leq\frac{x}{2}) \wedge (y_2\geq\frac{1+x}{2})$. Note that $\cale_{2,1} \subset \cale_2$, we have, 
\begin{eqnarray*}
 \E_{y_1\leq y_2}[\Phi_0 \mid \cale_2]
 & \geq &  \E_{y_1 \leq y_2}\left[\Phi_0 \mid \cale_{2,1}\right]\cdot\; \mathbbm{P}\left[\cale_{2,1}\mid \cale_2\right]. 
\end{eqnarray*}
When $\cale_{2, 1}$ occurs, $\deltaz \geq \frac{c}{2}$, $\deltax \leq c$  and we have,
\begin{eqnarray*}
\E_{y_1\leq y_2}[\Phi_0 \mid \cale_{2,1}]& = & \E_{y_1 \leq y_2}\left[\frac{e^{-\deltaz} - 1}{e^{-\deltaz} + 1}\cdot \frac{-2}{4\cosh^2(\deltax/2) }\mid \cale_{2,1}\right] \\
& \geq & \E_{y_1 \leq y_2}\left[\frac{e^{-\frac{c}{2}} - 1}{e^{-\frac{c}{2}} + 1}\cdot \frac{-2}{4\cosh^2(\frac{c}{2}) }\mid \cale_{2,1}\right] \\
& = &\frac{1-e^{-\frac{c}{2}}}{1+e^{-\frac{c}{2}} }\cdot \frac{1}{2\cosh^2(\frac{c}{2}) } = c_3(c) >0.
\end{eqnarray*}
Combining the above two cases, we have
$\frac{d\calf_0(x)}{dx} \geq  \frac{c_3(c)}{2\left(1+c_Y\right)^2}\;x(1-x)$. 
 
Together with (\ref{eqn:derivativetobound}), we complete the proof of the first part in Lemma~\ref{lem:calf}.  

\end{proof}
Lemma~\ref{prop:lower} follows by finding applying $\delta = c_o\epsilon$ for some suitable constant $c_o$ 
\end{proof}


\section{Missing proofs for finding similar alternatives}\label{a:item}
This section presents missing proofs in Section~\ref{sec:item}.

\subsection{Proof for Lemma~\ref{lem:itemsim}}
We prove the upper bound first.

We consider two cases.

\mypara{Case 1: both $y_i$ and $y_j$ are at the same side of $\frac 1 2$.} We have $y_i, y_j \leq \frac 1 2$ or $y_i, y_j \geq \frac 1 2$. Wlog, assume that $y_i \leq y_j \leq \frac 1 2$. Let $r = 1 - y_i - y_j$. We have
{\small
\begin{eqnarray*}
\mu_x(y_i, y_j) & = &  \int_{x \in [0, r]}2I(R_x(y_i) - R_x(y_j) < 0) - 1\;dx   + \int_{x \in [r, 1]}2I(R_x(y_i) - R_x(y_j) < 0) - 1\;dx.
\end{eqnarray*}
}
We can use a symmetric argument  to show that $\int_{x \in [0, r]}2I(R_x(y_i) - R_x(y_j) < 0) - 1 dx = 0$. Therefore,
$\mu_x(y_i, y_j) = \int_{x \in [r, 1]}2I(R_x(y_i) - R_x(y_j) < 0) - 1 dx.$

Let $b(z) = \frac{1}{\exp(-z) + 1}$. The function $b(\cdot)$ is used to measure the probability that $y_i \prec_x y_j$ when $y_i \leq y_j \leq x$. We have
\begin{equation}\label{eqn:sameside}
2I(R_x(y_i) - R_x(y_j) < 0) - 1  = 2b(\delta_{i, j}) - 1 = \frac{\delta}{2} + o(\delta).
\end{equation}
The last equation uses a Taylor expansion.
Therefore,
{\footnotesize
$$\int_{x \in [r, 1]}2I(R_x(y_i) - R_x(y_j) < 0) - 1\;dx = (1-r)(\frac{\delta}{2} + o(\delta)) \leq c_2 \delta$$
}
for some constant $c_2$.

\mypara{Case 2: $y_i$ and $y_j$ are on different sides of $\frac 1 2$. } Wlog, assume that $y_i \leq \frac 1 2 \leq y_j$ and $\tau_i \geq \tau_j$ (other cases can be analyzed in a similar fashion). We can again use the same symmetric trick to show that
\begin{equation}
\mu_x(y_i, y_j) \leq (2b(|y_i - y_j|) - 1)(1 - y_i - y_j).
\end{equation}
Observe that $2b(|y_i - y_j|) - 1 \leq 2b(1) - 1 = \Theta(1)$. Furthermore, one can see that $1 - y_i - y_j \leq \delta_{i, j}$. Therefore, $\mu_x(y_i, y_j) = O(\delta_{i, j})$.

For the lower bound, we consider two cases.

\mypara{Case 1: $y_i$ and $y_j$ are at the same side of $\frac 1 2$.} Wlog, assume that $y_i \leq y_j \leq \frac 1 2$. Recall that $r = y_i + y_j$, and from (\ref{eqn:sameside}), we have 
$$\mu_x(y_i, y_j) = (2b(\delta_{i, j}) - 1) (1 - r).$$ 
We observe that $(1-r) \geq \delta_{i, j}$ and $(2b(\delta_{i, j}) - 1) = \Omega(\delta_{i, j})$ (by using a Taylor expansion again). We have $\mu_x(y_i, y_j) \geq c_1 \delta^2_{i, j}$ for some constant $c_1$. 

\mypara{Case 2: $y_i$ and $y_j$ are on different sides of $\frac 1 2$. } Wlog, we continue to assume that $y_i \leq \frac 1 2 \leq y_j$ and $\tau_i \leq \tau_j$. We again have $\mu_x(y_i, y_j) = (2b(\delta_{i, j}) - 1) (1 - r).$ One can see that even $y_i$ and $y_j$ are at two sides of $\frac 1 2$, $|y_i - y_j| \geq \delta_{i, j}$. Therefore, we still have $\mu_x(y_i, y_j) \geq c_1 \delta^2_{i, j}$ for some constant $c_1$.  

\subsection{Proof of Lemma~\ref{lem:cluster}}
Proving Lemma~\ref{lem:cluster} requires two steps: 
\begin{itemize}
\item The gap between $Q_1$ and $Q_2$ is sufficiently large so we need only polynomial number of agents (\ie $n$ is polynomial) to detect the gap. 
\item Even without the knowledge of $Q_1$ and $Q_2$, we can split the set $\bfC(y_i)$. 
\end{itemize}

\myparab{Gap between $Q_1$ and $Q_2$.} We now analyze the gap $Q_1 - Q_2$ to determine the sample complexity (\ie size of $n$). Wlog, assume that $y_i \leq \frac 1 2$. Let $q = \frac 1 2 - \theta$ for some $\theta$. We observe that $Q_1 - Q_2 = \Theta(\theta)$. Therefore, the sample complexity is $O(\theta^2 \log m)$. 

We then determine the size of $\theta$. We note that $q = \E_x[\cale_1(x, y_i) \mid y_i]$ and $\E_x[\cale_1(x, y_i) \mid y_i]$ is monotonically decreasing in $x$. When $y_i = 0$, $\theta = \Theta(1)$. In this case, the sample complexity is $O(\log m)$. When $y_i \rightarrow \frac 1 2$, $\theta \rightarrow 0$ so the sample complexity grows to infinite. But when $y_i \rightarrow \frac 1 2$, all alternatives in $\bfC(i)$ are neighbors of $y_i$ so we do not need to filter out any elements. 

The most ``harsh'' case to handle is when $y_i = \frac 1 2 - 2\delta$. In this case, while $y_i$ is close to the mid-point $\frac 1 2$, we still need to filter out alternatives that are around $1 - y_i$. Our crucial observation is that $\E_x[\cale_1(x, y_i) \mid y_i]$ is smooth and continuous on $y$ and $|d\E_x[\cale_1(x, y_i) \mid y_i]/dy| = \Theta(1)$. Therefore, we have $\theta \geq c_0 \delta$ for some $c_0$. This implies the sample complexity is $n = O(\frac{1}{\delta^2} \log n)$. With the knowledge of $Q_1$ and $Q_2$, we can use a simple rule to determine whether $y_j$ is a neighbor of $y_i$: if $S(y_i, y_j) > \frac{Q_1 + Q_2}{2}$, then $y_j$ is a neighbor of $y_i$. Otherwise, $y_j$ is a neighbor of $1-y_i$.

\myparab{The knowledge of $Q_1$ and $Q_2$.} We do not know the values of $Q_1$ and $Q_2$ so the above rule cannot be directly implemented. We next describe a simple trick to circumvent the issue. Specifically, let $\bfC^+ \subseteq \bfC$ be the set of $y_j$'s that are close to $y_i$ and $\bfC^- \subseteq \bfC$ be the set of $y_j$'s that are close to $1-y_i$. Define $\calq = \{S(y_i, y_j): j \in \bfC(y_i)\}$, $\calq^+ = \{S(y_i, y_j): j \in \bfC^+(y_i)\}$, and $\calq^- = \{S(y_i, y_j): j \in \bfC^-(y_i)\}$. We note that points in $\calq^+$ have center $Q_1$ and $|z - Q_1| = o(\delta)$ for any $z \in \calq^+$ (when $n = \omega(\frac 1 {\delta^2}\log n)$); similarly, points in $\calq^-$ have center $Q_2$ and $|z - Q_2| = o(\delta)$. Finally, $|Q_1 - Q_2| = \Omega(\delta)$. Therefore, using a standard $k$-means clustering algorithm for 1-dimensional lines, we can recover $\calq^+$ and $\calq^-$ from $\calq$ without the knowledge of $Q_1$ and $Q_2$. 

\section{Analysis of Algorithm~\ref{alg:split}}
We make two remarks here: \emph{(i)} We parameterize the error in terms of $\delta$ instead of $\ell$ because we aim to obtain a result that's independent to how $\bfC(y_i)$ is constructed; \emph{(ii)} We aim to find all $y_j$ such that $|y_j - y_i| \leq 5\delta$ (instead of $|y_j - y_i| \leq \delta$). This means when $y_i$ and $1 - y_i$ are too close (say $|y_i - (1-y_i)| \leq 2\delta$), we do not need to filter out any element in $\bfC(y_i)$.

We construct a statistics to determine whether $y_j$ is close to $y_i$ or $1 - y_i$. Let us first define two events:
\begin{itemize}[noitemsep,topsep=0pt]
\setlength{\itemsep}{0pt}
\setlength{\parskip}{0pt}
\setlength{\parsep}{0pt}
\item $\cale_1(x, y)$: the alternative $y$ is in the first half of $R_x$ (\ie $R_x(y) \leq \frac m 2$).
\item $\cale_2(x, y)$: the alternative $y$ is in the second half of $R_x$.
\end{itemize}
Next, define
\begin{equation}
S(y_i, y_j) = \frac 1 n\sum_{k \in [n]}\left(I\big(\bigvee_{t \in \{1, 2\}}(\cale_t(x_k, y_i) \wedge \cale_t(x_k, y_j)\big)\right)
\end{equation}
$I\big(\bigvee_{t \in \{1, 2\}}(\cale_t(x_k, y_i) \wedge \cale_t(x_k, y_j)\big)$ is $1$ if and only if both $y_i$ and $y_j$ are in the first half or second half of $R_x$.

We next  show that $\{S(y_i, y_j)\}_{j \in [m]}$ k-means cluster around two values, depending on whether $y_j$ is close to $y_i$ or $1-y_i$. For exposition purpose, we assume $y_j = y_i$ or $y_j = 1 - y_i$ and examine the behavior of $S(y_i, y_j)$. We can use standard Taylor expansion analysis to handle the case where we have only $|y_j - y_i| \leq \delta$ or $|y_j - (1-y_i)| \leq \delta$.

Note that $I\big(\bigvee_{t \in \{1, 2\}}(\cale_t(x_k, y_i) \wedge \cale_t(x_k, y_j)\big)$ are i.i.d. random variables for different $k$. Let $q$ be $\Pr[\cale_1(x_k, y_i) \mid y_i]$. We have $1 - q = \Pr[\cale_2(x_k, y_i) \mid y_i]$. When $y_j = y_i$, we have
\begin{equation}
Q_1 \equiv \E I\big(\bigvee_{t \in \{1, 2\}}(\cale_t(x_k, y_i) \wedge \cale_t(x_k, y_j)\big) = q^2 + (1-q)^2.
\end{equation}
This uses the fact that $\cale_t(x_k, y_i)$ and $\cale_t(x_k, y_i)$ are close to independent when $m$ is large.

When $y_j = 1-y_i$, we have
\begin{equation}
Q_2 \equiv \E I\big(\bigvee_{t \in \{1, 2\}}(\cale_t(x_k, y_i) \wedge \cale_t(x_k, y_j)\big) = 2q(1-q).
\end{equation}
Therefore, $Q_1 > Q_2$ when $q$ is bounded away from $\frac 1 2$. This also implies that when $n$ is sufficiently large, we have $S(y_i, y_j) \rightarrow Q_1$ when $y_j = y_i$ and $S(y_i, y_j) \rightarrow Q_2$ when $y_j = 1-y_i$. By running a simple clustering algorithm (over 1-dim space with performance guarantee, we can identify $y_j$'s that are close to $1-y_i$. See Algorithm~\ref{alg:split} and Lemma~\ref{lem:cluster}.
{\footnotesize}

\end{document}